\newtheorem{theorem}{Theorem}
\newtheorem{lemma}{Lemma}
\newtheorem{remark}{Remark}
\newcommand{\Ex}{\mathbb{E}}
\newcommand{\ExR}[1]{\mathbb{E}\left[ #1 \right]}
\newcommand{\ExB}[1]{\mathbb{E}\left[ #1 \right]}
\newcommand{\condR}{\ \vline\  }
\newcommand{\condB}{\ \vline\ }
\newcommand{\comment}[1]{}
\newcommand{\Beta}{\text{Beta}}
\newtheorem{fact}{Fact}
\newcommand{\remove}[1]{\textcolor{red}{}}
\newcommand{\add}[1]{#1}
\title{Analysis of Thompson Sampling for the multi-armed bandit problem}
 \author{Shipra Agrawal \\ 
 Microsoft Research India\\
 \texttt{shipra@microsoft.com} 
 \and
 Navin Goyal\\
 Microsoft Research India\\
 \texttt{navingo@microsoft.com}\\
 }
\begin{document}
\date{}
\maketitle

\begin{abstract}
The multi-armed bandit problem is a popular model for studying exploration/exploitation trade-off in sequential decision
problems. Many algorithms are now available for this well-studied problem. 
One of the earliest algorithms, given by W. R. Thompson, dates back to 1933. This algorithm, referred to as Thompson Sampling, is a natural  Bayesian algorithm. The basic idea is to choose an arm to play according to its 
probability of being the best arm. Thompson Sampling algorithm has experimentally been shown to be close to optimal. In addition, it is efficient to implement and exhibits several desirable properties such as small regret for delayed feedback.
However, theoretical understanding of this algorithm was quite limited. In this paper, for the first time,
we show that Thompson Sampling algorithm achieves logarithmic expected regret for the stochastic multi-armed bandit problem. More precisely, for the stochastic two-armed  bandit problem, the expected regret in time $T$ is $O(\frac{\ln T}{\Delta} + \frac{1}{\Delta^3})$. And, for the stochastic $N$-armed  bandit problem, the expected regret in time $T$ is $O(\left[(\sum_{i=2}^N \frac{1}{\Delta_i^2})^2\right] \ln T)$. Our bounds are optimal  but for the dependence on $\Delta_i$ and the constant factors in big-Oh. 
\end{abstract}


\section{Introduction}
Multi-armed bandit (MAB) problem models the exploration/exploitation trade-off inherent in sequential decision problems. 
 Many versions and generalizations of the multi-armed bandit problem have been studied in the literature; in this paper we will consider a basic
and well-studied version of this problem: the stochastic multi-armed bandit problem. 
 Among many algorithms available for the stochastic bandit problem, some popular ones include Upper Confidence Bound (UCB) family of algorithms, (e.g., \cite{LaiR, Aueretal}, and more 
recently \cite{Garivier11}, \cite{Maillard11}, \cite{Kauffman12}), which have good theoretical guarantees, and the algorithm by \cite{Gittins}, which gives optimal strategy under Bayesian setting with known priors and geometric time-discounted rewards. 
 In one of the earliest works on stochastic bandit problems, \cite{Thompson} proposed a natural randomized Bayesian algorithm to minimize regret. 
  The basic idea is to assume a simple prior distribution on the parameters of the reward distribution of every arm, and at any time step, play an arm according to its posterior probability of being the best arm. This algorithm is known as \emph{Thompson Sampling} (TS), and it is a member of the family of \emph{randomized probability matching} algorithms. 
 We emphasize that although TS algorithm is a Bayesian approach, the description of the algorithm and our analysis apply to the prior-free stochastic multi-armed bandit model where parameters of the reward distribution of every arm are fixed, though unknown (refer to Section \ref{sec:MAB}). One could think of the ``assumed" Bayesian priors as a tool employed by the TS algorithm to encode the current knowledge about the arms. 
Thus, our regret bounds for Thompson Sampling are directly comparable to the regret bounds for UCB family of algorithms which are a frequentist approach to the same problem. 
 
Recently, TS has attracted considerable attention. Several studies (e.g., \cite{Granmo, Scott, CLi, MayL}) have empirically demonstrated the efficacy of Thompson Sampling: \cite{Scott} provides a detailed discussion of probability matching techniques in many general settings along with favorable empirical comparisons with other techniques. \cite{CLi} demonstrate that empirically TS achieves regret comparable to the lower bound of \cite{LaiR}; and in applications like display advertising and news article recommendation, it is competitive to or better than popular methods such as UCB. In their experiments, TS is also more robust to delayed or batched feedback (delayed feedback means that the result of a play of an arm may become available only after some time delay, but we are required to make immediate decisions for which arm to play next) than the other methods. 
A possible explanation may be that TS is a randomized algorithm and so it is unlikely to 
get trapped in an early bad decision during the delay. 
Microsoft's adPredictor (\cite{GraepelCBH10}) for CTR prediction of search ads on Bing uses the idea of Thompson Sampling. 

It has been suggested (\cite{CLi}) that despite being easy to implement and being competitive to the state of the art methods, the reason TS is not very popular in literature could be its lack of strong theoretical analysis. Existing theoretical analyses in \cite{Granmo, MayKLL} provide weak guarantees, namely, a bound of $o(T)$ on expected regret in time $T$. 
In this paper, for the first time, we provide a logarithmic bound on expected regret of TS algorithm in time $T$ that is close to the lower bound of~\cite{LaiR}. 
Before stating our results, we describe the MAB problem and the TS algorithm formally. 



\subsection{The multi-armed bandit problem}
\label{sec:MAB}
We consider the stochastic multi-armed bandit (MAB) problem: We are given a slot machine with $N$ arms; 
at each time step $t=1, 2, 3, \ldots $, one of the $N$ arms must be chosen to be played. 
Each arm $i$, when played, yields a random real-valued reward according to some fixed (unknown) distribution with support in $[0,1]$. 
The random reward obtained from playing an arm repeatedly are i.i.d. and independent of the plays of the other arms. 
The reward is observed immediately after playing the arm. 

An algorithm for the MAB problem must decide which arm to play at each time step $t$, based on the outcomes of the previous $t-1$ plays. 
Let $\mu_i$ denote the (unknown) expected reward for arm $i$.
A popular goal is to maximize the expected total reward in time $T$, i.e., $\Ex[\sum_{t=1}^{T} \mu_{i(t)}]$, where $i(t)$ is the arm played 
in step $t$, and the expectation is over the random choices of $i(t)$ made by the algorithm. 
It is more convenient
to work with the equivalent measure of expected total \emph{regret}: the amount we lose because of not playing optimal arm in each step. 
To formally define regret, let us introduce some notation. 
Let $\mu^*:=\max_i \mu_i$, and $\Delta_i := \mu^* - \mu_i$. Also, let $k_i(t)$ denote the number of times arm $i$ has been played up to step $t-1$.  Then the
expected total regret in time $T$ is given by \vspace{-0.05in}
$$\mbox{$\ExR{{\cal R}(T)}= \ExR{\sum_{t=1}^{T} (\mu^{*}-\mu_{i(t)})} = \sum_{i} \Delta_i \cdot \ExR{k_i(T)}. $}$$
Other performance measures include PAC-style guarantees; we do not consider those measures here. 


\subsection{Thompson Sampling}
For simplicity of discussion, we first provide the details of Thompson Sampling algorithm for the Bernoulli bandit problem, i.e. when the rewards are either $0$ or $1$, and for arm $i$ the probability of success (reward =$1$) is $\mu_i$. This description of Thompson Sampling follows closely that of \cite{CLi}. Next, we propose a simple new extension of this algorithm to general reward distributions with support $[0,1]$, which will allow us to seamlessly extend our analysis for Bernoulli bandits to general stochastic bandit problem.

The algorithm for Bernoulli bandits maintains Bayesian priors on the Bernoulli means $\mu_i$'s. Beta distribution turns out to be a very convenient 
choice of priors for Bernoulli rewards. 
Let us briefly recall that beta distributions form a family of continuous probability distributions on the interval $(0,1)$. The pdf of $\Beta(\alpha, \beta)$, the beta distribution with parameters $\alpha > 0$, $\beta > 0$, is given by $f(x; \alpha, \beta) = \frac{\Gamma(\alpha+\beta)}{\Gamma(\alpha)\Gamma(\beta)}x^{\alpha-1}(1-x)^{\beta-1}$. The mean of $\Beta(\alpha, \beta)$ is $\alpha/(\alpha+\beta)$; and as is apparent from the pdf, higher the $\alpha, \beta$, tighter is the concentration of $\Beta(\alpha, \beta)$ around the mean. 
Beta distribution is useful for Bernoulli rewards 
because if the prior is a $\Beta(\alpha, \beta)$ distribution, then after observing a Bernoulli trial, the posterior distribution is simply $\Beta(\alpha+1, \beta)$ or $\Beta(\alpha, \beta+1)$, depending on whether the trial resulted in a success or failure, respectively. 

The Thompson Sampling algorithm initially assumes arm $i$ to have prior $\Beta(1, 1)$ on $\mu_i$, which is
natural because $\Beta(1,1)$ is the uniform distribution on $(0,1)$. At time $t$, having observed $S_i(t)$ successes (reward = $1$) and $F_i(t)$  failures (reward = $0$) in $k_i(t) = S_i(t)+F_i(t)$ plays of arm $i$, the algorithm updates the distribution on $\mu_i$ as $\Beta(S_i(t)+1, F_i(t)+1)$. 
The algorithm then samples from these posterior distributions of the $\mu_i$'s, and plays an arm according to the probability of its mean being the largest. We summarize the Thompson Sampling algorithm below. \vspace{-0.05in}

  \begin{algorithm}[H]
  \caption{Thompson Sampling for Bernoulli bandits}
$S_i=0, F_i=0$.\\
\ForEach{$t=1, 2, \ldots,$}{
		For each arm $i=1,\ldots, N$, sample $\theta_i(t)$ from the $\Beta(S_i+1, F_i+1)$ distribution. \\
  	Play arm $i(t) := \arg \max_i \theta_i(t)$ and observe reward $r_t$.\\
  	If $r=1$, then $S_{i}=S_i+1$, else $F_i=F_i+1$.
  	}
\end{algorithm}

We adapt the Bernoulli Thompson sampling algorithm to the general stochastic bandits case, i.e. when the rewards for arm $i$ are generated from an arbitrary unknown distribution with support $[0,1]$ and mean $\mu_i$, in a way that allows us to reuse our analysis of the Bernoulli case. To our 
knowledge, this adaptation is new. 
We modify TS so that after observing the reward $\tilde{r}_t\in [0,1]$ at time $t$, it performs a Bernoulli trial with success probability $\tilde{r_t}$. Let random variable $r_t$ denote the outcome of this Bernoulli trial, and let $\{S_i(t), F_i(t)\}$ denote the number of successes and failures in the Bernoulli trials until time $t$. The remaining algorithm is the same as for Bernoulli bandits. Algorithm 2 gives the precise description of this algorithm.

We observe that the probability of observing a success (i.e., $r_t=1$) in the Bernoulli trial after playing an arm $i$ in the new generalized algorithm is equal to the mean reward $\mu_i$. Let $f_i$ denote the (unknown) pdf of reward distribution for arm $i$. Then, on playing arm $i$,
  $$\mbox{$\Pr(r_t=1) = \int_0^1 \tilde{r} f_i(\tilde{r}) d\tilde{r} = \mu_i.$}$$
 Thus, the probability of observing $r_t=1$ is same and $S_i(t), F_i(t)$ evolve exactly in the same way as in the case of Bernoulli bandits with mean $\mu_i$. Therefore, the analysis of TS for Bernoulli setting
is applicable to this modified TS for the general setting. 
This allows us to replace, for the purpose of analysis, the problem with general stochastic bandits with Bernoulli bandits with the same means. We use this observation to confine the proofs in this paper to the case of Bernoulli bandits only. 

\begin{algorithm}[h]
 \caption{Thompson Sampling for general stochastic bandits}
   $S_i=0, F_i=0$.\\
   \ForEach{$t=1, 2, \ldots,$}{
        For each arm $i=1,\ldots, N$, sample $\theta_i(t)$ from the $\Beta(S_i+1, F_i+1)$ distribution. \\
  			Play arm $i(t) := \arg \max_i \theta_i(t)$ and observe reward $\tilde{r}_t$.\\
  			{\bf Perform a Bernoulli trial with success probability $\tilde{r}_t$ and observe output $r_t$.}\\
  			If $r_t=1$, then $S_{i}=S_i+1$, else $F_i=F_i+1$.
   	}
  \end{algorithm}
  

\subsection{Our results}
In this article, we bound the \emph{finite time} expected regret of Thompson Sampling. 
From now on we will assume that the first arm is the unique optimal arm, i.e., $\mu^* = \mu_1 > \arg \max_{i\ne 1} \mu_i$. Assuming that the first arm is an optimal arm is a matter of convenience for stating the results and for the analysis. 
The assumption of \emph{unique} optimal arm is also without loss of generality, since adding more arms with $\mu_i=\mu^*$ can only decrease the expected regret; details of this argument are provided in Appendix \ref{app:mul-opt}.

\begin{theorem}
\label{th:two-arms}
For the two-armed stochastic bandit problem ($N=2$), Thompson Sampling algorithm has expected regret  \vspace{-0.1in}
$$\Ex[{\cal R}(T)] = O\left(\frac{\ln T}{\Delta}+\frac{1}{\Delta^3}\right) \vspace{-0.1in}$$
in time $T$, where $\Delta = \mu_1-\mu_2$.
\end{theorem}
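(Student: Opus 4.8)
The plan is to bound the expected number of plays of the suboptimal arm, $\Ex[k_2(T)]$, since $\Ex[\mathcal{R}(T)] = \Delta\,\Ex[k_2(T)]$; and by the reduction already described it is enough to treat the Bernoulli case, so I may assume the successes behave as $S_i(t)\sim\mathrm{Bin}(k_i(t),\mu_i)$. Fix a threshold strictly between the two means, e.g.\ $y := (\mu_1+\mu_2)/2$, so that $\mu_1-y = y-\mu_2 = \Delta/2$, and set $L := \lceil C\,\Delta^{-2}\ln T\rceil$ for a suitable absolute constant $C$. Every time arm $2$ is played at step $t$ we have $\theta_2(t)\ge\theta_1(t)$, so that play is of one of two types: \emph{(A)} $\theta_2(t)\ge y$ (the sample of the bad arm overshoots the threshold), or \emph{(B)} $\theta_2(t)<y$, which forces $\theta_1(t)\le\theta_2(t)<y$ (the sample of the good arm undershoots the threshold). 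I will bound the expected number of plays of each type.

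\emph{Type (A).} Since the posterior $\Beta(S_2(t)+1,F_2(t)+1)$ of arm $2$ is centred near $\mu_2<y$, a sample exceeding $y$ becomes very unlikely once arm $2$ has been tried a few times. Quantitatively, combining the beta--binomial identity $\Pr_{\theta\sim\Beta(\alpha,\beta)}(\theta>y)=\Pr(\mathrm{Bin}(\alpha+\beta-1,y)\le\alpha-1)$ with a Chernoff bound on $S_2(t)\sim\mathrm{Bin}(k_2(t),\mu_2)$ gives $\Pr(i(t)=2,\ \theta_2(t)\ge y)\le e^{-\Omega(k_2(t)\Delta^2)}$. As each value of $k_2$ is realized by at most one play of arm $2$, summing the resulting geometric series over $k_2=0,1,2,\dots$ bounds the expected number of Type (A) plays by $O(1/\Delta^2)$; this term carries no $\ln T$.

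\emph{Type (B).} Here I first peel off the ``warm-up'': $\Ex[k_2(T)]\le L+\Ex[\#\{t:\ i(t)=2,\ k_2(t)\ge L\}]$, and for $k_2(t)\ge L$ the choice of $L$ makes the event $\theta_2(t)\ge y$ have probability $o(1/T)$ (same Chernoff $+$ identity), so up to an $o(1)$ additive correction every arm-$2$ play with $k_2(t)\ge L$ has $\theta_1(t)<y$. To count these I break the timeline into \emph{epochs} delimited by the plays of arm $1$: epoch $j$ consists of the steps strictly between the $j$-th and $(j{+}1)$-th plays of arm $1$. Throughout epoch $j$ the posterior of arm $1$ is frozen at $\Beta(S_1(j)+1,\ j-S_1(j)+1)$, so an independent fresh sample $\theta_1$ exceeds $y$ with the \emph{same} probability at every step of the epoch, namely
\[
 p_j \;:=\; \Pr\nolimits_{\theta\sim\Beta(S_1(j)+1,\ j-S_1(j)+1)}(\theta\ge y)\;=\;\Pr\bigl(\mathrm{Bin}(j+1,y)\le S_1(j)\bigr).
\]
The epoch ends at the first step where $\theta_1>\theta_2$; on the high-probability event (guaranteed by the choice of $L$) that every arm-$2$ posterior met with $k_2\ge L$ places at least half its mass below $y$, this happens with probability at least $p_j/2$ at each step, so the number of arm-$2$ plays in epoch $j$ is stochastically dominated by a geometric variable, with conditional expectation at most $2/p_j$. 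Summing, $\Ex[\#\{t:\ i(t)=2,\ \theta_1(t)<y\}]\le\sum_{j=0}^{T-1}\Ex[2/p_j]$. I then split the epochs at $j^\star=\Theta(\Delta^{-2}\ln T)$: for $j\ge j^\star$ a Chernoff bound gives $p_j\ge 1-o(1/T)$ with probability $1-o(1/T)$, so those epochs contribute $o(1)$ in total; the remaining $O(\Delta^{-2}\ln T)$ epochs contribute $\sum_{j<j^\star}\Ex[2/p_j]\le 2j^\star+2\sum_{j\ge 0}\bigl(\Ex[1/p_j]-1\bigr)$.

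The crux is therefore the estimate $\sum_{j\ge0}\bigl(\Ex[1/p_j]-1\bigr)=O(1/\Delta^3)$, where $\Ex[1/p_j]=\Ex_{S\sim\mathrm{Bin}(j,\mu_1)}\bigl[1/\Pr(\mathrm{Bin}(j+1,y)\le S)\bigr]$. I would split the outer expectation according to whether $S$ is ``typical'' ($S\gtrsim(j+1)y$) or not: for typical $S$ one has $\Pr(\mathrm{Bin}(j+1,y)\le S)=1-e^{-\Omega(j\Delta^2)}$, so $1/p_j-1=e^{-\Omega(j\Delta^2)}$, summable to $O(1/\Delta^2)$; for an atypical value $S=s$ one bounds $\Pr(\mathrm{Bin}(j+1,y)\le s)\ge\Pr(\mathrm{Bin}(j+1,y)=s)$, after which $\Pr(S=s)/\Pr(\mathrm{Bin}(j+1,y)=s)$ is governed by the ratio $\bigl((1-\mu_1)/(1-y)\bigr)^{\,j+1-s}(\mu_1/y)^{s}$, and since $y<\mu_1$ forces $1-\mu_1<1-y$, summing this geometric-type series first over $s$ and then over $j$ collapses to $O\bigl(1/(\Delta\,D(y\|\mu_1))\bigr)=O(1/\Delta^3)$ by Pinsker's inequality $D(y\|\mu_1)\ge\Delta^2/2$. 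Combining everything, $\Ex[k_2(T)]=L+O(1/\Delta^2)+O(\Delta^{-2}\ln T)+O(1/\Delta^3)+o(1)=O\!\left(\tfrac{\ln T}{\Delta^2}+\tfrac{1}{\Delta^3}\right)$, whence $\Ex[\mathcal{R}(T)]=\Delta\,\Ex[k_2(T)]=O\!\left(\tfrac{\ln T}{\Delta}+\tfrac1{\Delta^2}\right)$, which lies within the claimed bound (the $\Delta$-exponent of the second term is not optimized here). The main obstacle I anticipate is exactly this last estimate --- taming the unlucky realizations of $S_1(j)$, for which $p_j$ is exponentially small, uniformly over all $j\le T$ --- together with making the epoch argument rigorous despite the arm-$2$ posterior drifting \emph{within} an epoch; the latter is handled by lower-bounding the per-step epoch-ending probability uniformly on a ``good'' event and absorbing its complement through a union bound over the $T$ steps, which is precisely where the $\ln T$ factor is spent.
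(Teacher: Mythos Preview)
Your plan is essentially the paper's own proof: the same threshold $y=\mu_2+\Delta/2$, the same warm-up $L=\Theta(\Delta^{-2}\ln T)$, the same ``epoch'' decomposition between consecutive plays of arm~1, the same identification $p_j=\Pr(\mathrm{Bin}(j{+}1,y)\le S_1(j))$ via the beta--binomial identity, and the same key estimate $\sum_j\bigl(\Ex[1/p_j]-1\bigr)=O(1/\Delta^3)$ obtained by splitting on $s\lessgtr yj$, lower-bounding the binomial cdf by its pmf for small $s$, and finishing with Pinsker. So the strategy is right.

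There is, however, one quantitative slip that breaks the argument as written. You lower-bound the per-step epoch-ending probability by $p_j/2$ (using only ``$\Pr(\theta_2<y)\ge\tfrac12$'' on the good event), which makes the expected epoch length at most $2/p_j-1$. For $j\ge j^\star$ you have $p_j\ge 1-o(1/T)$, so each such epoch still has expected length $\approx 1$, not $o(1/T)$; summing over up to $T$ epochs gives $\Theta(T)$, not the claimed $o(1)$. The point is that the constant you lose on the arm-2 side cannot be absorbed: you need the epoch mean to be $1/p_j-1$ (which \emph{does} vanish as $p_j\to1$), not $2/p_j-1$. Two clean fixes: either (i) strengthen the good event to ``for all $t$ with $k_2(t)\ge L$, the arm-2 posterior places at least $1-O(1/T^2)$ mass below $y$'' (this still holds with probability $1-O(1/T)$ by the same Chernoff argument, and restores the epoch mean to $(1/p_j)(1+O(1/T^2))-1$); or (ii) do what the paper does and decompose additively, $Y_j\le X(j,s(j),y)+T\cdot\mathbf 1\{\exists\,t\in I_j:\theta_2(t)>y\}$, so that $\Ex[X]=1/p_j-1$ carries no arm-2 loss and the indicator term is controlled by $\sum_t T\cdot\Pr(\theta_2(t)>y,\ k_2(t)\ge L)=O(1)$.

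A smaller remark: your separate Type~(A) bound ``$\Pr(i(t){=}2,\ \theta_2(t)\ge y)\le e^{-\Omega(k_2(t)\Delta^2)}$ summed over $k_2$ gives $O(1/\Delta^2)$'' hides a selection bias --- the sample $\theta_2$ at the time arm~2 is actually pulled is conditioned on $\theta_2>\theta_1$, so it is stochastically larger than a fresh draw, and you cannot simultaneously drop this conditioning and re-index by $k_2$. Since you do not use this bound in the final tally (the warm-up $L$ plus the $o(1)$ overshoot correction already cover Type~(A) plays), it is harmless to delete it.
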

\begin{theorem}
\label{th:N-arms}
For the $N$-armed stochastic bandit problem, Thompson Sampling algorithm has expected regret \vspace{-0.1in}
$$\Ex[{\cal R}(T)] \leq O\left(\left(\sum_{a=2}^N\frac{1}{\Delta_a^2}\right)^2 \ln T\right)  \vspace{-0.1in}$$
in time $T$, where $\Delta_i=\mu_1-\mu_i$.
\end{theorem}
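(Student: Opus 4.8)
The plan is to bound, for each suboptimal arm $a$, the expected number of plays $\Ex[k_a(T)]$, and then combine these via $\Ex[{\cal R}(T)]=\sum_{a\ge 2}\Delta_a\,\Ex[k_a(T)]$. By the reduction to Bernoulli rewards described above, it is enough to treat Bernoulli arms with means $\mu_i$. I would fix a suboptimal arm $a$ together with two thresholds $x_a,y_a$ satisfying $\mu_a<x_a<y_a<\mu_1$ (concretely $x_a=\mu_a+\tfrac13\Delta_a$, $y_a=\mu_1-\tfrac13\Delta_a$), and partition every step $t$ at which arm $a$ is played into three types: (i) the empirical mean $S_a(t)/k_a(t)$ exceeds $x_a$; (ii) $S_a(t)/k_a(t)\le x_a$ but the posterior sample $\theta_a(t)$ exceeds $y_a$; (iii) $\theta_a(t)\le y_a$, in which case, since arm $a$ was selected over arm $1$, necessarily $\theta_1(t)\le y_a<\mu_1$.

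Types (i) and (ii) are the routine part and already produce the $\ln T$ term. For (i): conditioned on $k_a(t)=m$, Hoeffding's inequality gives $\Pr[S_a(t)/m>x_a]\le e^{-2m(x_a-\mu_a)^2}$, and summing the expected count over $m$ yields $O(1/(x_a-\mu_a)^2)=O(1/\Delta_a^2)$. For (ii): using the standard identity expressing the Beta CDF through a Binomial CDF, once $k_a(t)$ exceeds $\Theta\!\big(\tfrac{\ln T}{(y_a-x_a)^2}\big)=\Theta\!\big(\tfrac{\ln T}{\Delta_a^2}\big)$ the conditional probability that a $\Beta(S_a(t)+1,F_a(t)+1)$ sample with empirical mean at most $x_a$ exceeds $y_a$ is $O(1/T)$, while before this point there are at most $\Theta(\ln T/\Delta_a^2)$ such steps trivially. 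So types (i) and (ii) together contribute $O(\ln T/\Delta_a^2)$ in expectation.

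The heart of the proof, and what I expect to be the main obstacle, is controlling the type-(iii) steps, i.e. the steps where arm $1$'s sample is ``too small''. I would argue around the successive plays of arm $1$: letting $\tau_k$ be the step of the $k$-th play of arm $1$, in any step $t\in(\tau_k,\tau_{k+1}]$ the conditional probability of selecting arm $1$ is at least $p_k:=\Pr[\theta_1(t)>y_a\mid \mathcal{F}_{t-1}]$, a quantity determined by $k$ and $S_1(\tau_k)$; a coupling between the events ``arm $1$ is selected'' and ``some arm $b$ with $\theta_b(t)\le y_a$ is selected'' then shows that the expected number of type-(iii) steps falling in $(\tau_k,\tau_{k+1}]$ is at most $\Ex[1/p_k-1]$. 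For $k$ above $L_a:=\Theta(\ln T/\Delta_a^2)$, Chernoff concentration of $S_1(\tau_k)$ around $\mu_1 k$ forces $p_k$ to be bounded below by an absolute constant outside an event of probability $O(1/T)$, so these $k$ contribute only $O(1/\Delta_a^2)$. The difficulty is the prefix $k<L_a$, where $p_k$ can be tiny and $\Ex[1/p_k]$ cannot be controlled term by term; instead I would bound, in one shot, the total number of type-(iii) steps occurring before arm $1$ has been played $L_a$ times by the length of that initial phase, and bound that length by summing over all suboptimal arms $b$ a per-arm estimate (of order $\ln T/\Delta_b^2$, essentially the (i)+(ii) bound applied to $b$) for how often $b$ can be played during it. This is exactly the step where the arms become entangled: idle exploration of one suboptimal arm cannot be attributed to another, so the accounting multiplies the per-arm $\ln T/\Delta_a^2$ by a factor $\sum_{b\ge 2}1/\Delta_b^2$, giving $\Ex[k_a(T)]=O\!\big((\sum_{b\ge 2}\tfrac{1}{\Delta_b^2})\tfrac{\ln T}{\Delta_a^2}\big)$. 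Summing $\Delta_a\Ex[k_a(T)]$ over $a$ and using $\Delta_a\le 1$ to replace $1/\Delta_a$ by $1/\Delta_a^2$ then yields the claimed $O\big((\sum_{a\ge 2}1/\Delta_a^2)^2\ln T\big)$ bound. The technical points needing care are the coupling inequality for the inter-play intervals, the threshold $L_a$ at which the Chernoff bound becomes effective, and making the ``initial phase'' decomposition valid simultaneously for every arm.
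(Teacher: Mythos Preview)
Your decomposition is not the one the paper uses. The paper does not split plays of each suboptimal arm into three types or use the coupling inequality you describe; instead it classifies suboptimal arms as \emph{saturated} (played $\ge L_i=24\ln T/\Delta_i^2$ times) or \emph{unsaturated}, splits each inter-play interval $I_j$ of arm~1 into sub-intervals delimited by the event $M(t)=\{\theta_1(t)>\max_{i\in C(t)}\mu_i+\tfrac{\Delta_i}{2}\}$, bounds each sub-interval length by a geometric variable $X(j,s(j),y_a)$, and obtains the squared factor by allocating the global budget $\sum_j\gamma_j\le\sum_u L_u+O(N)$ (the total number of unsaturated-arm plays bounds the number of sub-intervals) to the worst value of~$j$.

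Your outline has a real gap at the ``initial phase'' step. You propose to bound type-(iii) plays of arm $a$ before arm~1 reaches $L_a$ plays by the length $\tau_{L_a}$ of that phase, and then bound $\tau_{L_a}$ by $\sum_{b\neq1}(\text{plays of }b\text{ before }\tau_{L_a})$, claiming each summand is $O(\ln T/\Delta_b^2)$ ``essentially by the (i)+(ii) bound applied to $b$''. But the (i)+(ii) bound for $b$ only controls plays of $b$ where its empirical mean or its posterior sample is too large; it says nothing about plays of $b$ with $\theta_1(t)\le y_b$ (type-(iii) for $b$), and those can be numerous precisely because arm~1 is under-sampled in this phase. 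So the argument is circular: bounding the initial phase for $a$ requires bounding type-(iii) plays of every other $b$, which is the same problem you started with. Incidentally, your premise that $\Ex[1/p_k]$ ``cannot be controlled term by term'' for small $k$ is false: the paper's Lemma~\ref{lem:exj} (proved for the two-arm case and reused in the $N$-arm proof) gives exactly this control, with each term $O(1/\Delta_a)$ plus an exponentially decaying piece, so that $\sum_{k\ge0}\Ex[1/p_k-1]=O(\ln T/\Delta_a^2+1/\Delta_a^4)$. Combining that with your (correct) coupling inequality would eliminate the initial-phase detour entirely and in fact yield a sharper bound than the theorem states.
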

\begin{remark}
\label{rem:N-arms}
For the $N$-armed bandit problem, we can obtain an alternate bound of \vspace{-0.05in}
$$\Ex[{\cal R}(T)] \leq O\left(\frac{\Delta_{max}}{\Delta_{min}^3}\left(\sum_{a=2}^N\frac{1}{\Delta_a^2}\right) \ln T\right) \vspace{-0.05in}$$
by slight modification to the proof. The above bound has a better dependence on $N$ than in Theorem \ref{th:N-arms}, but worse dependence on $\Delta_{i}s$. Here $\Delta_{min}=\min_{i\ne 1} \Delta_i$,$\Delta_{max}=\max_{i\ne 1} \Delta_i$.
\end{remark}
In interest of readability, we used big-Oh notation \footnote{For any two functions $f(n),g(n)$, $f(n)=O(g(n))$ if there exist two constants $n_0$ and $c$ such that for all $n\ge n_0$, $f(n) \le cg(n)$.} to state our results. The exact constants are provided in the proofs of the above theorems.
Let us contrast our bounds with the previous work. \cite{LaiR} proved the following lower bound on regret of any bandit algorithm: 
$$\Ex[{\cal R}(T)] \geq \left[\sum_{i=2}^{N} \frac{\Delta_i}{D(\mu_i || \mu)} + o(1) \right] \ln{T}, $$
where $D$ denotes the KL divergence. They also gave algorithms asymptotically achieving this guarantee, though unfortunately their algorithms are not efficient. \cite{Aueretal} gave the UCB1 algorithm, which is efficient and achieves the following bound: 
$$\Ex[{\cal R}(T)] \le \left[8 \sum_{i=2}^N \frac{1}{\Delta_i} \right] \ln{T} + (1+\pi^2/3)\left(\sum_{i=2}^{N} \Delta_i\right).$$
For many settings of the parameters, the bound of Auer~et~al. is not far from the lower bound of Lai and Robbins. 
Our bounds are optimal in terms of dependence on $T$, but inferior in terms of the constant factors and dependence on $\Delta$. 
We note that for the two-armed case our bound closely matches the bound of  \cite{Aueretal}. 
For the $N$-armed setting, the exponent of $\Delta$'s in our bound is basically $4$ compared to the exponent $1$ for UCB1. 

More recently, \cite{Kauffman12} gave Bayes-UCB algorithm which achieves regret bounds close to the lower bound of \cite{LaiR} for Bernoulli rewards. Bayes-UCB is a UCB like algorithm, where the upper confidence bounds are based on the quantiles of Beta posterior distributions. Interestingly, these upper confidence bounds turn out to be similar to those used by algorithms in \cite{Garivier11} and \cite{Maillard11}. 
Bayes-UCB can be seen as an hybrid of TS and UCB. However, the general structure of the arguments used in \cite{Kauffman12} is similar to \cite{Aueretal}; for the analysis of Thompson Sampling we need to deal with additional difficulties, as discussed in the next section.

\section{Proof Techniques} 
\label{sec:techniques}
In this section, we give an informal description of the techniques involved in our analysis. We hope that this will aid in reading the proofs, though this section is not essential for the sequel. We assume that all arms are Bernoulli arms, and that the first arm is the unique optimal arm. 
As explained in the previous sections, these assumptions are without loss of generality.

\paragraph{Main technical difficulties.}
Thompson Sampling is a randomized algorithm which achieves exploration by choosing to play the arm with best sampled mean, among those generated from beta distributions around the respective empirical means. The beta distribution becomes more and more concentrated around the empirical mean as the number of plays of an arm increases. This randomized setting is unlike the algorithms in UCB family, which achieve exploration by adding a \emph{deterministic, non-negative} bias inversely proportional to the number of plays, to the observed empirical means. 
Analysis of TS poses difficulties that seem to require new ideas.

For example, following general line of reasoning is used to analyze regret of UCB like algorithms in two-arms setting (for example, in \cite{Aueretal}): once the second arm has been played sufficient number of times, its empirical mean is tightly concentrated around its actual mean. If the first arm has been played sufficiently large number of times by then, it will have an empirical mean close to its actual mean and larger than that of the second arm. Otherwise, if it has been played small number of times, its non-negative bias term will be large. Consequently, once the second arm has been played sufficient number of times, it will be played with very small probability (inverse polynomial of time) \emph{regardless of the number of times the first arm has been played so far}. 

However, for Thompson Sampling, if the number of previous plays of the first arm is small, then the probability of playing the second arm could be as large as a constant even if it has already been played large number of times. For instance, if the first arm has not been played at all, then $\theta_1(t)$ is a uniform random variable, and thus $\theta_1(t) < \theta_2(t)$ with probability $\theta_2(t) \approx \mu_2$. As a result, in our analysis we need to carefully consider the distribution of the number of previous plays of the first arm, in order to bound the probability of playing the second arm. 

The observation just mentioned also points to a challenge in extending the analysis of TS for two-armed bandit to the general $N$-armed bandit setting. One might consider analyzing the regret in the $N$-armed case by considering only two arms at a time---the first arm and one of the suboptimal arms. We could use the observation that the probability of playing a suboptimal arm is bounded by the probability of it exceeding the first arm. However, this probability also depends on the number of previous plays of the two arms, which in turn depend on the plays of the other arms. Again, \cite{Aueretal}, in their analysis of UCB algorithm, overcome this difficulty by bounding this probability for \emph{all possible numbers of previous plays} of the first arm, and large enough plays of the suboptimal arm. For Thompson Sampling, due to the observation made earlier, the (distribution of the) number of previous plays of the first arm needs to be carefully accounted for, which in turn requires considering all the arms at the same time, thereby leading to a more involved analysis.

\paragraph{Proof outline for two arms setting.} Let us first consider the special case of two arms which is simpler than the general $N$ arms case. 
Firstly, we note that it is sufficient to bound the regret incurred during the time steps \emph{after} the second arm has been played \remove{$L=16(\ln T)/\Delta^2$} \add{$L=24(\ln T)/\Delta^2$} times. The expected regret before this event is bounded by $24(\ln T)/\Delta$ because only the plays of the second arm produce an expected regret of $\Delta$; regret is $0$ when the first arm is played. 
Next, we observe that after the second arm has been played $L$ times, the following happens with high probability: the empirical average reward of the second arm from each play is very close to its actual expected reward $\mu_2$, and its beta distribution is tightly concentrated around $\mu_2$. This means that, thereafter, the first arm would be played at time $t$ if $\theta_1(t)$ turns out to be greater than (roughly) $\mu_2$. This observation allows us to model the number of steps between two consecutive plays of the first arm as a geometric random variable with parameter close to $\Pr[\theta_1(t) > \mu_2]$. 
To be more precise, given that there have been $j$ plays of the first arm with $s(j)$ successes and $f(j)=j-s(j)$ failures, we want to estimate the expected number of steps before the first arm is played again (not including the steps in which the first arm is played). This is modeled by a geometric random variable $X(j,s(j), \mu_2)$ with parameter $\Pr[\theta_1 > \mu_2]$, where $\theta_1$ has distribution $\mbox{Beta}(s(j)+1, j-s(j)+1)$, and thus $\ExB{X(j,s(j), \mu_2)\condB s(j)} = 1/\Pr[\theta_1 > \mu_2] - 1$. To bound the overall expected number of steps between the $j^{th}$ and  $(j+1)^{th}$ play of the first arm, we need to take into account the distribution of the number of successes $s(j)$. For large $j$, 
we use Chernoff--Hoeffding bounds to say that $s(j)/j \approx \mu_1$ with high probability, and moreover $\theta_1$ is concentrated around its mean, and thus we get a good estimate of $\ExR{\ExB{X(j,s(j),\mu_2) \condB s(j)}}$. However, for small $j$ we do not have such concentration, and it requires a delicate computation to get a bound on $\ExR{\ExB{X(j,s(j), \mu_2)\condB s(j)}}$. The resulting bound on the expected number of steps between consecutive plays of the first arm bounds the expected number of plays of the second arm, to yield a good bound on the regret for the two-arms setting. 

\paragraph{Proof outline for $N$ arms setting.} 
 At any step $t$, we divide the set of suboptimal arms into two subsets: \emph{saturated} and \emph{unsaturated}. The set $C(t)$ of saturated arms at time $t$ consists of arms $a$ that have already been played a sufficient number ($L_a = 24(\ln T)/\Delta^2_a$) of times, so that with high probability, $\theta_a(t)$ is tightly concentrated around $\mu_a$.
 As earlier, we try to estimate the number of steps between two consecutive plays of the first arm. After $j^{th}$ play, the $(j+1)^{th}$ play of first arm will occur at the earliest time $t$ such that $\theta_1(t) > \theta_i(t), \forall i\ne 1$. The number of steps before $\theta_1(t)$ is greater than $\theta_a(t)$ of all saturated arms $a \in C(t)$ can be \remove{analyzed} \add{closely approximated} using a geometric random variable with parameter close to $\Pr(\theta_1 \ge \max_{a \in C(t)} \mu_a)$, as before. However, even if $\theta_1(t)$ is greater than the $\theta_a(t)$ of all saturated arms $a\in C(t)$, it may not get played due to play of an unsaturated arm $u$ with a greater $\theta_u(t)$. Call this event an ``interruption" by unsaturated arms. We show that if there have been $j$ plays of first arm with $s(j)$  successes, the expected number of steps until the $(j+1)^{th}$ play can be upper bounded by the product of the expected value of a geometric random variable similar to $X(j,s(j), \max_a \mu_a)$ defined earlier, and the number of interruptions by the unsaturated arms. Now, the total number of interruptions by unsaturated arms is bounded by $\sum_{u=2}^N L_u$ (since an arm $u$ becomes saturated after $L_u$ plays). The actual number of interruptions  is hard to analyze due to the high variability in the parameters of the unsaturated arms. We derive our bound assuming the worst case allocation of these $\sum_u L_u$ interruptions. This step in the analysis is the main source of the high 
exponent of $\Delta$ in our regret bound for the $N$-armed case compared to the two-armed case.

\section{Regret bound for the two-armed bandit problem}
In this section, we present a proof of Theorem \ref{th:two-arms}, our result for the two-armed bandit problem. 
Recall our assumption that all arms have Bernoulli distribution on rewards, and that the first arm is the unique optimal arm. 

Let random variable $j_0$ denote the number of plays of the first arm until \remove{$L=16(\ln T)/\Delta^2$} \add{$L=24(\ln T)/\Delta^2$} plays of the second arm. \add{Let random variable $t_j$ denote the time step at which the $j^{th}$ play of the first arm happens (we define $t_0=0$).} Also, let random variable $Y_j\add{=t_{j+1}-t_j-1}$ measure the number of time steps between the $j^{th}$ and $(j+1)^{th}$ plays of the first arm (not counting the steps in which
the $j^{th}$ and $(j+1)^{th}$ plays happened), and let $s(j)$ denote the number of successes in the first $j$ plays of the first arm. Then the expected number of plays of the second arm in time $T$ is bounded by 
\begin{center}
$\Ex[k_2(T)] \le L + \ExR{\sum_{j=j_0}^{T-1} Y_j}.$
\end{center} 

To understand the expectation of $Y_j$, it will be useful to define another random variable
$X(j,s,y)$ as follows. We perform the following experiment until it succeeds: check if a $\mbox{Beta}(s+1, j-s+1)$ distributed random variable exceeds a threshold $y$. For each experiment, we generate the beta-distributed r.v. independently of the previous ones.  
Now define $X(j,s,y)$ to be the number of trials \emph{before} 
the experiment succeeds. Thus, $X(j,s,y)$ takes non-negative integer values, and is a geometric random variable with parameter (success probability) $1-F^{beta}_{s+1,j-s+1}(y)$. Here $F^{beta}_{\alpha, \beta}$ denotes the cdf of the beta distribution with parameters $\alpha, \beta$. Also, let $F^B_{n,p}$ denote the cdf of the \emph{binomial} distribution with parameters $(n,p)$.

We will relate $Y$ and $X$ shortly. The following lemma provides a handle on the expectation of $X$. 
\begin{lemma}
\label{lem:exj-1}
For all non-negative integers $j,s\le j$, and for all $y\in [0,1]$, \vspace{-0.05in}
$$\mbox{$\ExR{X(j,s,y)} = \frac{1}{F^B_{j+1,y}(s)} - 1$},\vspace{-0.05in}$$
where $F^B_{n,p}$ denotes the cdf of the binomial distribution with parameters $(n,p)$.
\end{lemma}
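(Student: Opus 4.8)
The plan is to reduce the statement to the classical Beta--Binomial CDF identity and then prove that identity. First I would record that, by construction, $X(j,s,y)$ is a geometric random variable counting the number of failed trials before the first success, with success probability $p := 1 - F^{beta}_{s+1,j-s+1}(y)$. Hence $\mathbb{E}[X(j,s,y)] = \sum_{k\ge 0} k(1-p)^k p = \frac{1-p}{p} = \frac{1}{p}-1$, so it suffices to prove
$$1 - F^{beta}_{s+1,\,j-s+1}(y) = F^B_{j+1,\,y}(s), \qquad \text{equivalently} \qquad F^{beta}_{s+1,\,j-s+1}(y) = 1 - F^B_{j+1,\,y}(s).$$

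For the identity itself I would use the short order-statistics argument. Let $U_1,\dots,U_{j+1}$ be i.i.d.\ uniform on $[0,1]$ and let $U_{(s+1)}$ denote their $(s+1)$-th smallest value. It is a standard fact that $U_{(s+1)}$ has distribution $\mbox{Beta}(s+1,\,j+1-s)$, so $F^{beta}_{s+1,\,j-s+1}(y) = \Pr[U_{(s+1)}\le y]$. Now $U_{(s+1)}\le y$ holds exactly when at least $s+1$ of the $U_i$'s lie in $[0,y]$, and the number of such $U_i$ is distributed as $\mbox{Binomial}(j+1,y)$; therefore $\Pr[U_{(s+1)}\le y] = \Pr[\mbox{Bin}(j+1,y)\ge s+1] = 1 - F^B_{j+1,y}(s)$. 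Combining this with the first paragraph yields $\mathbb{E}[X(j,s,y)] = \frac{1}{F^B_{j+1,y}(s)}-1$. (An alternative, purely computational route is to write $F^{beta}_{s+1,j-s+1}(y) = \frac{\Gamma(j+2)}{\Gamma(s+1)\Gamma(j-s+1)}\int_0^y x^s(1-x)^{j-s}\,dx$ and integrate by parts $s$ times, each step peeling off one term $\binom{j+1}{k}y^k(1-y)^{j+1-k}$ of the binomial sum.)

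I do not expect a genuine obstacle here; this is essentially a textbook identity. The only points requiring a little care are the invocation of the order-statistics distribution fact (or, in the alternative route, keeping track of the combinatorial constants in the repeated integration by parts) and the degenerate boundary cases: when $s=j$ and $y=1$, or more importantly when $y=0$, the quantity $F^B_{j+1,y}(s)$ can equal $0$, in which case the geometric variable has success probability $0$ and $\mathbb{E}[X(j,s,y)]=\infty$, consistent with interpreting $\frac{1}{0}-1$ as $+\infty$; I would note that the lemma is applied only with $y\in(0,1)$ strictly, so this case is harmless.
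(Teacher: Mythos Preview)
Your proposal is correct and follows essentially the same route as the paper: the paper's proof also reduces to the geometric-variable expectation formula and then invokes the Beta--Binomial CDF identity (stated there as Fact~\ref{fact:beta-binomial}), and the paper's proof of that identity in the appendix is precisely your order-statistics argument. Your discussion of the $y\in\{0,1\}$ boundary cases is a small extra that the paper does not bother with, but otherwise the two arguments are the same.
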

\begin{proof}
By the well-known formula for the expectation of a geometric random variable and the definition of $X$ we have, 
$\ExR{X(j,s,y)} = \frac{1}{1-F^{beta}_{s+1, j-s+1}(y)}-1$
(The additive $-1$ is there because we do not count the final step where the Beta r.v. is greater than $y$.) 
The lemma then follows from Fact~\ref{fact:beta-binomial} in Appendix \ref{app:facts}. 
\end{proof}

\remove{In order to bound expected value of $Y$ using expected value of $X$, 
we will condition on an event $E_2(T)$, which we define to hold iff $\theta_2(t) \le \mu_2 + \frac{\Delta}{2}$ for all time $t \in [1, T]$ such that the second arm has already been played at least $L$ times before time $t$. The following lemma can be proven 
using Fact~\ref{fact:beta-binomial} and the Chernoff--Hoeffding bounds. 
\begin{lemma}
\label{lem:E2}
\forall $t$, $ \Pr(E_2(T))\ge 1-\frac{2}{T}.$
\end{lemma}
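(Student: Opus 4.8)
The plan is to bound the probability of the complementary event, that there is some time $t\le T$ with $k_2(t)\ge L$ for which the freshly drawn sample $\theta_2(t)\sim\Beta(S_2(t)+1,\,k_2(t)-S_2(t)+1)$ exceeds $\mu_2+\tfrac{\Delta}{2}$. The starting point is Fact~\ref{fact:beta-binomial}: for fixed parameters $(s,k)$ we have $\Pr\!\left[\Beta(s+1,k-s+1)>y\right]=F^B_{k+1,y}(s)$. Hence, conditioning on the history $\mathcal{H}_{t-1}$ up to time $t$ (which determines $k_2(t)$ and $S_2(t)$, while $\theta_2(t)$ is drawn with fresh randomness), $\Pr\!\left[\theta_2(t)>\mu_2+\tfrac{\Delta}{2}\mid \mathcal{H}_{t-1}\right]=F^B_{k_2(t)+1,\,\mu_2+\Delta/2}(S_2(t))$. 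The difficulty is that $S_2(t)$ is itself random: if it happens to be atypically large this probability is close to $1$, and one cannot simply condition on $k_2(t)=k$ to recover a clean binomial law for $S_2(t)$, since the algorithm's play counts depend on arm $2$'s past rewards.

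I would decouple these two layers of randomness by working with the underlying i.i.d.\ $\mathrm{Bernoulli}(\mu_2)$ reward sequence $r_2^{(1)},r_2^{(2)},\dots$ produced by successive plays of arm $2$, together with its prefix sums $\hat S_k:=\sum_{i\le k}r_2^{(i)}$, so that $S_2(t)=\hat S_{k_2(t)}$ and, crucially, $\hat S_k\sim\mathrm{Bin}(k,\mu_2)$ \emph{unconditionally} for every fixed $k$ (thinking of this sequence as drawn in advance also makes the event below well defined). Define the good event $G:=\bigl\{\hat S_k\le(\mu_2+\tfrac{\Delta}{4})k \text{ for all } k=L,\dots,T-1\bigr\}$. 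A Chernoff--Hoeffding bound gives $\Pr[\hat S_k>(\mu_2+\tfrac{\Delta}{4})k]\le e^{-k\Delta^2/8}$, which is at most $e^{-L\Delta^2/8}=T^{-3}$ for $k\ge L$; a union bound over the at most $T$ values of $k$ then yields $\Pr[G^c]\le T^{-2}$.

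On the event $G$, for any $t$ with $k_2(t)=k\ge L$ we have $S_2(t)=\hat S_k\le(\mu_2+\tfrac{\Delta}{4})k\le(\mu_2+\tfrac{\Delta}{4})(k+1)$, so by monotonicity of the binomial cdf in its last argument $\Pr\!\left[\theta_2(t)>\mu_2+\tfrac{\Delta}{2}\mid\mathcal{H}_{t-1}\right]\le F^B_{k+1,\,\mu_2+\Delta/2}\bigl((\mu_2+\tfrac{\Delta}{4})(k+1)\bigr)=\Pr\!\left[\mathrm{Bin}(k+1,\mu_2+\tfrac{\Delta}{2})\le(\mu_2+\tfrac{\Delta}{4})(k+1)\right]$, which is a deviation of $(k+1)\tfrac{\Delta}{4}$ below the mean, hence at most $e^{-(k+1)\Delta^2/8}\le T^{-3}$ since $k\ge L$ by a second Chernoff--Hoeffding bound. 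Taking expectations and summing over $t=1,\dots,T$ gives $\Pr[E_2(T)^c\cap G]\le T\cdot T^{-3}=T^{-2}$; combined with $\Pr[G^c]\le T^{-2}$ this yields $\Pr[E_2(T)^c]\le 2/T^2\le 2/T$, proving the lemma.

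The main obstacle is exactly the bookkeeping around the two coupled sources of randomness — the random count $S_2(t)$ of successes among the random number $k_2(t)$ of plays of arm $2$, plus the independent Beta draw on top — and the fact that naive conditioning on $k_2(t)=k$ is not valid; introducing the fixed i.i.d.\ reward sequence $(r_2^{(i)})$ and the event $G$ is what makes the argument go through, after which it reduces to two routine Chernoff--Hoeffding estimates and a union bound. The constant in $L=24(\ln T)/\Delta^2$ is chosen precisely so that $e^{-L\Delta^2/8}=T^{-3}$, leaving room for the two union bounds (each losing a factor of at most $T$).
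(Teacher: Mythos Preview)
Your proof is correct and follows essentially the same route as the paper. Both arguments decouple the randomness by passing to the i.i.d.\ reward sequence of arm~2 (your $\hat S_k$ is the paper's $\ell\,\overline{Z}_{2,\ell}$), then apply Chernoff--Hoeffding twice: once to control the empirical mean (your event $G$ corresponds to the paper's event $A(t)$), and once to bound the Beta/binomial tail given that the empirical mean is close to $\mu_2$; the only organizational difference is that you define $G$ globally over all $k\ge L$ and union-bound at the end, whereas the paper works per time step and sums over the possible values $k_2(t)=\ell$.
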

\begin{proof}
Refer to Appendix \ref{app:lem:E2}.
\end{proof}
Recall that $Y_j$ was defined as the number of steps before $\theta_1(t) > \theta_2(t)$ for the first time after the $j^{th}$ play of the first arm. Now, under event $E_2(T)$, $\theta_2(t) \le \mu_2+\Delta/2$, for all time $t$ after $L$ plays of the second arm. Therefore, given that event $E_2(T)$ holds, and that the number of successes in $j$ trials of first arm is $s(j)$, $Y_j$ for $j\ge j_0$ is  stochastically dominated by  geometric random variable $X(j,s(j),\mu_2+\frac{\Delta}{2})$. Also, it is bounded by $T$. If the event $E_2(T)$ does not hold, we can bound $\sum_j Y_j$ by the trivial upper bound of $T$. Using these observations, and the bound on $\Pr(E_2(T))$ from Lemma \ref{lem:E2},
\begin{eqnarray*}
\mbox{$\ExR{\sum_{j=j_0}^{T-1} Y_j}$} & \le & \mbox{$\ExR{ \sum_{j=j_0}^{T-1} \ExB{ Y_j \condB j_0, s(j), E_2(T) }} + \overline{\Pr(E_2(T))} \cdot T$} \\
 & \le & \mbox{$\ExR{\sum_{j=0}^{T-1}  \min\{ \ExB{X(j,s(j),\mu_2+\frac{\Delta}{2}) \condB s(j)} ,  T\}} + 2$}. 
\end{eqnarray*}
 }
\add{Recall that $Y_j$ was defined as the number of steps before $\theta_1(t) > \theta_2(t)$ happens for the first time after the $j^{th}$ play of the first arm. Now, consider the number of steps before $\theta_1(t) > \mu_2+\frac{\Delta}{2}$ happens for the first time after the $j^{th}$ play of the first arm. Given $s(j)$, this has the same distribution as $X(j,s(j), \mu_2 + \frac{\Delta}{2})$. However, $Y_j$ can be larger than this number if (and only if) at some time step $t$ between $t_{j}$ and $t_{j+1}$, $\theta_2(t) > \mu_2+\frac{\Delta}{2}$. In that case we use the fact that  $Y_j$ is always bounded by $T$. Thus, for any $j\ge j_0$, we can bound $\Ex[Y_j]$ as,
$$\mbox{$\Ex[Y_j] \le \Ex[ \min\{X(j,s(j), \mu_2+\frac{\Delta}{2}), T\}] + \Ex[\sum_{t=t_{j}+1}^{t_{j+1-1}} T \cdot I(\theta_2(t) > \mu_2 + \frac{\Delta}{2})].$}$$
Here notation $I(E)$ is the indicator for event $E$, i.e., its value is $1$ if event $E$ happens and $0$ otherwise. In the first term of RHS, the expectation is over distribution of $s(j)$ as well as over the distribution of the geometric variable $X(j,s(j),\mu_2+\frac{\Delta}{2})$.
Since we are interested only in $j\ge j_0$, we will instead use the similarly obtained bound on $\Ex[Y_j \cdot I(j\ge j_0)]$,
$$\mbox{$\Ex[Y_j \cdot I(j\ge j_0)] \le \Ex[ \min\{X(j,s(j), \mu_2+\frac{\Delta}{2}), T\}] + \Ex[\sum_{t=t_{j}+1}^{t_{j+1-1}} T \cdot I(\theta_2(t) > \mu_2 + \frac{\Delta}{2})\cdot I(j\ge j_0)].$}$$
This gives,
\begin{eqnarray*}
\mbox{$\Ex[\sum_{j=j_0}^{T-1} Y_j]$} & \le & \mbox{$\sum_{j=0}^{T-1} \Ex[ \min\{X(j,s(j), \mu_2+\frac{\Delta}{2}), T\}] + T \cdot \sum_{j=0}^{T-1} \Ex[\sum_{t=t_{j}+1}^{t_{j+1-1}} I(\theta_2(t) > \mu_2 + \frac{\Delta}{2}, j\ge j_0)]$}\\
& \le & \mbox{$\sum_{j=0}^{T-1} \Ex[ \min\{X(j,s(j), \mu_2+\frac{\Delta}{2}), T\}]  + T \cdot \sum_{t=1}^T \Pr(\theta_2(t) > \mu_2 + \frac{\Delta}{2}, k_2(t)\ge L)$}.
\end{eqnarray*}
 The last inequality holds because for any $t\in [t_j+1, t_{j+1}-1], j\ge j_0$, by definition $k_2(t) \ge L$. 
 We denote the event  $\{\theta_2(t) \leq \mu_2 + \frac{\Delta}{2} \mbox{ or } k_2(t) < L\}$ by $E_2(t)$. In words, this is
the event that if sufficient number of plays of second arm have happened until time $t$, then $\theta_2(t)$ is not much larger than $\mu_2$; intuitively, we expect this event
to be a high probability event as we will show.
$\overline{E_2(t)}$ is the event $\{\theta_2(t) > \mu_2 + \frac{\Delta}{2} \mbox{ and } k_2(t)\ge L\}$ used in the above equation. Next, we bound 
$\Pr(E_2(t))$ and $\Ex[ \min\{X(j,s(j), \mu_2+\frac{\Delta}{2}), T\}]$.
\begin{lemma}
\label{lem:E2}
$$\forall t, \ \ \ \Pr(E_2(t)) \ge 1-\frac{2}{T^2}.$$
\end{lemma}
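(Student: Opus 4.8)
The plan is to bound the probability of the complementary event $\overline{E_2(t)} = \{\theta_2(t) > \mu_2 + \tfrac{\Delta}{2} \text{ and } k_2(t) \ge L\}$ and show it is at most $2/T^2$. Because $k_2(t)$ is a random, history-dependent quantity, I would first decompose over its value. Using the standard device of a fixed ``reward tape'' $W_1, W_2, \dots$ of i.i.d.\ $\mathrm{Bernoulli}(\mu_2)$ variables (the $j$-th play of arm $2$ revealing $W_j$), set $S_{2,k} := \sum_{j=1}^{k} W_j \sim \mathrm{Bin}(k,\mu_2)$, so that on $\{k_2(t)=k\}$ we have $S_2(t)=S_{2,k}$ and, since $\theta_2(t)$ is drawn afresh at time $t$ from $\Beta(S_2(t)+1,k_2(t)-S_2(t)+1)$, it is distributed as $\Beta(S_{2,k}+1,\,k-S_{2,k}+1)$ conditioned on the history. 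By Fact~\ref{fact:beta-binomial} (the same beta--binomial identity underlying Lemma~\ref{lem:exj-1}), $\Pr(\theta_2(t)>y \mid \text{history}) = F^B_{k+1,y}(S_{2,k})$ on $\{k_2(t)=k\}$, whence
$$\Pr(\overline{E_2(t)}) = \sum_{k=L}^{t-1} \ExB{ I(k_2(t)=k)\cdot F^B_{k+1,\,\mu_2+\Delta/2}(S_{2,k})} \le \sum_{k=L}^{t-1} \ExB{ F^B_{k+1,\,\mu_2+\Delta/2}(S_{2,k})}.$$

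For each fixed $k \ge L$ I would bound the inner expectation by splitting on whether $S_{2,k}$ exceeds the intermediate threshold $k(\mu_2 + \tfrac{\Delta}{4})$. If $S_{2,k} \ge k(\mu_2+\tfrac{\Delta}{4})$, the empirical mean of arm $2$ has deviated upward by $\Delta/4$, which by the Chernoff--Hoeffding bound happens with probability at most $e^{-2k(\Delta/4)^2}=e^{-k\Delta^2/8}$, so this contributes at most that much. If instead $S_{2,k}=s < k(\mu_2+\tfrac{\Delta}{4})$, then $F^B_{k+1,\,\mu_2+\Delta/2}(s) = \Pr(\mathrm{Bin}(k+1,\mu_2+\tfrac{\Delta}{2}) \le s)$ is the lower-tail probability of a binomial of mean $(k+1)(\mu_2+\tfrac{\Delta}{2})$ evaluated at a point more than $k\Delta/4$ below that mean, which by Chernoff--Hoeffding is at most $e^{-2(k\Delta/4)^2/(k+1)} \le e^{-(k-1)\Delta^2/8}$. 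Substituting $k \ge L = 24(\ln T)/\Delta^2$ makes both exponents at least roughly $3\ln T$, so $\ExB{F^B_{k+1,\,\mu_2+\Delta/2}(S_{2,k})} = O(T^{-3})$, uniformly in $k \ge L$.

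Summing over the at most $T$ admissible values of $k$ then yields $\Pr(\overline{E_2(t)}) = O(T^{-2})$, and a careful choice of the intermediate deviation pins the constant down to $2$, giving $\Pr(E_2(t)) \ge 1 - 2/T^2$. I expect the main obstacle to be precisely this constant bookkeeping: the factor $\tfrac18$ lost in Hoeffding's inequality, the discrepancy between $k$ and $k+1$ in the mean of the relevant binomial, and — most importantly — the fact that we must union over all possible values of the history-dependent quantity $k_2(t)$, which costs a factor of $T$ and is exactly why a per-value tail bound of order $1/T^3$ is required (hence the constant $24$ in $L$ rather than the $16$ that a naive $1/T^2$ target would suggest).

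A subsidiary point I would make explicit in the write-up is the legitimacy of the reward-tape reduction: since the observed rewards of arm $2$ are i.i.d.\ $\mathrm{Bernoulli}(\mu_2)$ and independent of everything else, $S_{2,k}$ is genuinely $\mathrm{Bin}(k,\mu_2)$ \emph{unconditionally}, so the passage from $\ExB{I(k_2(t)=k)\,F^B_{k+1,\mu_2+\Delta/2}(S_{2,k})}$ to $\ExB{F^B_{k+1,\mu_2+\Delta/2}(S_{2,k})}$ (simply dropping the indicator, which is at most $1$) is valid regardless of the dependence between $k_2(t)$ and the $W_j$'s, and the tail bounds above apply to the resulting unconditional binomial.
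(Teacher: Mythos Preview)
Your proposal is correct and follows essentially the same route as the paper: decompose over the value of $k_2(t)$, use the reward-tape device (the paper's $\overline{Z}_{2,\ell}$ plays the role of your $S_{2,k}/k$), split at the intermediate threshold $\mu_2+\Delta/4$ (the paper packages this as an event $A(t)$), apply the beta--binomial identity together with Chernoff--Hoeffding to each piece, and then sum the resulting $O(T^{-3})$ bounds over at most $T$ values of $k$. The only difference is organizational---the paper splits on $A(t)$ first and then unfolds $k_2(t)$, whereas you do it in the opposite order---but the ingredients and the arithmetic are the same.
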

\begin{proof}
Refer to Appendix \ref{app:lem:E2}.
\end{proof}
}
\begin{lemma}
\label{lem:exj}
Consider any positive $y < \mu_1$, and let $\Delta' = \mu_1 - y$. 
Also, let $R=\frac{\mu_1(1-y)}{y(1-\mu_1)} > 1$, and let $D$ denote the KL-divergence between $\mu_1$ and $y$, i.e.
 $D= y \ln \frac{y}{\mu_1} + (1-y) \ln \frac{1-y}{1-\mu_1}.$
\begin{eqnarray*}
\ExR{\ExB{ \min\{ \ X(j, s(j), y) , \ T \}\condB s(j)} } & \le & \ \left\{\begin{array}{ll}
   \displaystyle  1+ \frac{2}{1-y} + \frac{\mu_1}{\Delta'} e^{-Dj} &  j < \frac{y}{D}\ln R,\\
   \displaystyle 1+ \frac{R^y}{1-y}e^{-Dj} + \frac{\mu_1}{\Delta'} e^{-Dj} &  \frac{y}{D}\ln R \le j < \frac{4\ln T}{\Delta'^2},\\
   \displaystyle  \frac{16}{T} & j \ge \frac{4\ln T}{\Delta'^2},
   \end{array}\right.
\end{eqnarray*}
where the outer expectation is taken over $s(j)$ distributed as $\mbox{Binomial}(j, \mu_1)$.
\end{lemma}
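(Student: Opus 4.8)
The plan is to first reduce the claim, via Jensen's inequality and Lemma~\ref{lem:exj-1}, to a statement about binomial tails. Since the map $x\mapsto\min\{x,T\}$ is concave and nondecreasing, for each fixed value of $s$,
$$\ExB{\min\{X(j,s,y),T\}\condB s}\ \le\ \min\{\ExB{X(j,s,y)\condB s},\ T\}\ =\ \min\left\{\frac{1}{F^B_{j+1,y}(s)}-1,\ T\right\}$$
by Lemma~\ref{lem:exj-1}. Writing $f^B_{n,p}$ for the pmf of $\mbox{Binomial}(n,p)$, it therefore suffices to upper bound $\sum_{s=0}^{j} f^B_{j,\mu_1}(s)\,\min\{1/F^B_{j+1,y}(s)-1,\ T\}$, the weights $f^B_{j,\mu_1}(s)$ being the distribution of $s(j)$.

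The combinatorial core is the pmf-ratio identity
$$\frac{f^B_{j,\mu_1}(s)}{f^B_{j+1,y}(s)}\ =\ \frac{j+1-s}{j+1}\cdot\frac{1}{1-y}\left(\frac{1-\mu_1}{1-y}\right)^{j} R^{s}\ \le\ \frac{1}{1-y}\left(\frac{1-\mu_1}{1-y}\right)^{j} R^{s},$$
together with the computations $\left(\frac{1-\mu_1}{1-y}\right)^{j} R^{yj}=e^{-Dj}$ (which follows from $\ln R=(\ln(\mu_1/y)+D)/(1-y)$), $R/(R-1)=\mu_1(1-y)/\Delta'$, and $1/(R-1)=y(1-\mu_1)/\Delta'$. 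I would split the sum over $s$ at $m^{*}:=\lceil y(j+1)\rceil$, using the standard fact that the median of $\mbox{Binomial}(j+1,y)$ is at most $\lceil y(j+1)\rceil$. For $s\ge m^{*}$ we get $F^B_{j+1,y}(s)\ge 1/2$, hence $1/F^B_{j+1,y}(s)-1\le 1$, so that range of the sum contributes at most $\Pr[s\ge m^{*}]\le 1$, the additive $1$ in cases~1 and~2. For $s\le m^{*}-1\le y(j+1)$ I would bound $1/F^B_{j+1,y}(s)-1\le 1/f^B_{j+1,y}(s)$ (cdf dominates pmf) and apply the ratio identity; since $R>1$, the terms $\left(\frac{1-\mu_1}{1-y}\right)^{j}R^{s}$ grow geometrically in $s$, so that range of the sum is, up to the factor $(1-1/R)^{-1}$, carried by its top index $s=\lfloor y(j+1)\rfloor$. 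Peeling off that top term and bounding it by $\frac{R^y}{1-y}e^{-Dj}$ (using $\lfloor y(j+1)\rfloor\le y(j+1)$) and the geometric remainder by $\frac{\mu_1}{\Delta'}e^{-Dj}$ (using $1/(R-1)=y(1-\mu_1)/\Delta'$ and $R^{y-1}\le 1$) gives the case~2 estimate. When $j<\frac{y}{D}\ln R$ the geometric bound on the top term can exceed $\frac{1}{1-y}$, so there I would bound the $O(1)$ terms near the median crudely, using that the binomial cdf stays $\Omega(1)$ near its mean, which replaces $\frac{R^y}{1-y}e^{-Dj}$ by the uniform $\frac{2}{1-y}$ of case~1.

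For $j\ge\frac{4\ln T}{\Delta'^2}$ (case~3) the $\le 1$ bound on the high range is too weak and I would use concentration instead. The low range, where the term is capped at $T$, contributes at most $T\cdot\Pr[\mbox{Binomial}(j,\mu_1)\le y(j+1)]$, which is $O(1/T)$ by a Chernoff--Hoeffding bound, since reaching $y(j+1)$ is a deviation of order $\Delta' j$ below the mean $\mu_1 j$ and $j\ge\frac{4\ln T}{\Delta'^2}$. The high range contributes at most $2\,\Pr[\mbox{Binomial}(j+1,y)>\mbox{Binomial}(j,\mu_1)]$ (using $1/F-1\le 2(1-F)$ when $F\ge1/2$, and that the weighted sum telescopes to this probability), again $O(1/T)$ since the two means differ by $\Delta' j$; tracking constants yields the $16/T$ bound.

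I expect the main obstacle to be the bookkeeping for the low range: collapsing the geometric sum to the clean two-term form $\frac{R^y}{1-y}e^{-Dj}+\frac{\mu_1}{\Delta'}e^{-Dj}$ with exactly the stated constants, and in particular the small-$j$ (case~1) regime, where that estimate degrades and must be replaced by the crude $\frac{2}{1-y}$ bound via a different argument near the median. Getting the Chernoff--Hoeffding exponents in case~3 to line up with the threshold $\frac{4\ln T}{\Delta'^2}$ (so that the capped contribution is genuinely $O(1/T)$) also requires some care with constants.
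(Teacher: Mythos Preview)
Your proposal is essentially the paper's proof. The reduction via Lemma~\ref{lem:exj-1} to $\sum_s f^B_{j,\mu_1}(s)/F^B_{j+1,y}(s)$, the split at the median of $\mbox{Binomial}(j+1,y)$ with the $F\ge 1/2$ bound on the high side, the pmf-ratio/geometric-sum estimate on the low side collapsing to $\frac{\mu_1}{\Delta'}e^{-Dj}$, and the two alternative bounds on the boundary term giving the case~1/case~2 dichotomy at $j=\frac{y}{D}\ln R$---all of this matches the paper exactly. Your use of $f^B_{j+1,y}$ in the denominator where the paper instead writes $F^B_{j+1,y}(s)\ge(1-y)f^B_{j,y}(s)$ is an inessential variant that lands on the same bound.

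The one substantive difference is case~3. You keep the same split at $m^*=\lceil y(j+1)\rceil$, cap the low range at $T$, and on the high range use $1/F-1\le 2(1-F)$ plus the nice telescoping $\sum_s f^B_{j,\mu_1}(s)\bigl(1-F^B_{j+1,y}(s)\bigr)=\Pr[\mbox{Bin}(j+1,y)>\mbox{Bin}(j,\mu_1)]$. The paper instead moves the split point to the midpoint $(y+\Delta'/2)j$ of the two means for this case, so that a single Chernoff--Hoeffding bound applies symmetrically on each side and the threshold $j\ge 4\ln T/\Delta'^2$ delivers the constant~$16/T$ directly. Your route is correct but, as you anticipated, lining up the constants is fussier: with the split at $m^*\approx y(j+1)$ the low-range deviation is $j\Delta'-y$ rather than $j\Delta'/2$, and the high-range Hoeffding is over $2j+1$ terms, so recovering exactly $16/T$ takes extra massaging. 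If you want the stated constant, adopt the paper's midpoint split for case~3.
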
 
\begin{proof}
The complete proof of this lemma is included in Appendix \ref{app:lem:exj}; here we provide some high level ideas.

Using Lemma \ref{lem:exj-1}, the expected value of $X(j, s(j), y)$ for any given $s(j)$,
\begin{center} $\ExR{X(j,s(j),y) \condR s(j)} = \frac{1}{F^B_{j+1,y}(s(j))} - 1.$ \end{center}
For large $j$, i.e., $j \ge 4 (\ln T)/ \Delta'^2$, we use Chernoff--Hoeffding bounds to argue that with probability at least  ($1-\frac{8}{T^2}$), $s(j)$ will be 
greater than $\mu_1 j - \Delta' j/2$. And, for $s(j) \ge \mu_1 j - \Delta' j/2 = yj + \Delta' j /2$, we can show that the probability  $F^B_{j+1, y}(s(j))$ will be at least $1-\frac{8}{T^2}$, again using Chernoff--Hoeffding bounds. These observations allow us to derive that $\ExR{\ExB{\min\{X(j,s(j),y), T\}}} \le \frac{16}{T}$, for $j \ge 4 (\ln T)/ \Delta'^2$. 

For small $j$, the argument is more delicate. In this case, $s(j)$ could be small with a significant probability. More precisely, $s(j)$ could take a value $s$ smaller than $y j$ with binomial probability $f^B_{j,\mu_1}(s)$. For such $s$, we use the lower bound $F^B_{j+1, y}(s) \ge (1-y) F^B_{j,y}(s) + y F^B_{j,y}(s-1) \ge  (1-y) F^B_{j,y}(s) \ge (1-y) f^B_{j,y}(s)$, and then bound the ratio ${f^B_{j,\mu_1}(s)}/{f^B_{j,y}(s)}$ in terms of $\Delta'$, $R$ and KL-divergence $D$. For $s(j) = s\ge \lceil yj \rceil$, we use the observation that since $\lceil yj \rceil$ is greater than or equal to the median of $\mbox{Binomial}(j,y)$ (see \cite{JogdeoS}), we have $F^B_{j,y}(s) \ge 1/2$ . After some algebraic manipulations, we get the result of the lemma.
\end{proof}

Using Lemma~\ref{lem:E2}, and Lemma~\ref{lem:exj} for $y=\mu_2+\Delta/2$, and $\Delta'=\Delta/2$, we can bound the expected number of plays of the second arm as:
\begin{equation}
\label{eq:k2T}
\begin{array}{rcl}
\Ex[k_2(T)] & = & L  + \ExR{\sum_{j=j_0}^{T-1} Y_{j}} \vspace{0.1in}\\
& \le & L + \sum_{j=0}^{T-1} \ExR{\ExB{\min\{X(j,s(j), \mu_2+\frac{\Delta}{2}), T\} \condB s(j)}\ } + \remove{2} \add{\sum_{t=1}^{T} T\cdot \Pr(\overline{E_2(t)})} \vspace{0.1in}\\
& \le & L + \frac{4\ln T}{\Delta'^2} + \sum_{j=0}^{4(\ln T)/\Delta'^2-1} \frac{\mu_1}{\Delta'} e^{-Dj} + \left(\frac{y}{D}\ln R\right) \frac{2}{1-y} +  \sum_{j=\frac{y}{D}\ln{R}}^{4(\ln T)/\Delta'^2-1} \frac{R^y e^{-Dj}}{1-y} + \frac{16}{T} \cdot T + 2 \vspace{0.1in}\\
& \le &  \remove{\frac{32\ln T}{\Delta^2}} \add{\frac{40\ln T}{\Delta^2}} + \frac{48}{\Delta^4} + 18,
\end{array}
\end{equation}
where the last inequality is obtained after some algebraic manipulations; details are provided in Appendix \ref{app:eq:k2T}.

This gives a regret bound of 
$$\Ex[{\cal R}(T)] =\ExR{ \Delta \cdot \ k_2(T)} \le \left(\frac{\remove{32} \add{40}\ln T}{\Delta} + \frac{48}{\Delta^3} + 18\Delta\right). $$

\section{Regret bound for the $N$-armed bandit problem}
 In this section, we prove Theorem \ref{th:N-arms}, our result for the $N$-armed bandit problem. Again, we assume that all arms have Bernoulli distribution on rewards, and that the first arm is the unique optimal arm. 

At every time step $t$, we divide the set of suboptimal arms into saturated and unsaturated arms. We say that an arm $i \ne 1$ is in the saturated set $C(t)$ at time $t$, if it has been played at least \remove{$L_i:=\frac{16\ln T}{\Delta_i^2}$} \add{$L_i:=\frac{24\ln T}{\Delta_i^2}$} times before time $t$. We bound the regret due to playing unsaturated and saturated suboptimal arms
separately. The former is easily bounded as we will see; most of the work is in bounding the latter.  For this, we bound the number of plays of saturated arms between two consecutive plays of the first arm\remove{in which $\theta_i(t)$ of some saturated arm $i$ exceeds $\theta_1(t)$}.

\begin{figure}
\begin{center}
\includegraphics[width=150mm]{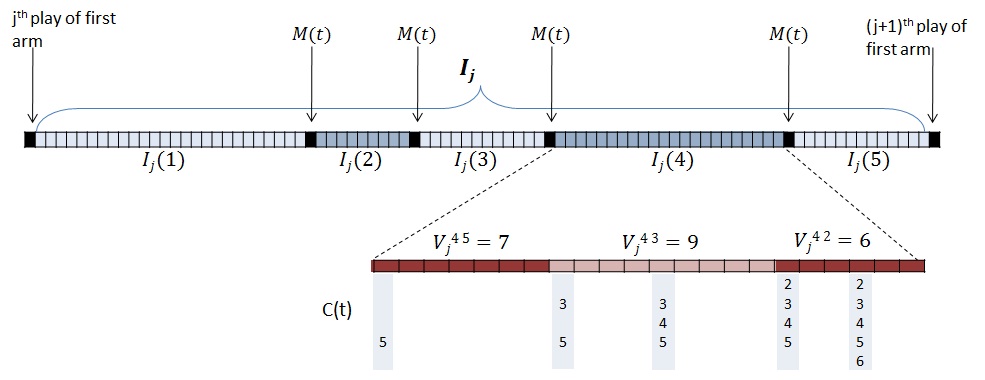}
\end{center}
\caption{\vspace{-1in} Interval $I_j$}
\label{fig:interval}
\end{figure}

In the following, by an interval of time we mean a set of contiguous time steps.
Let r.v. $I_j$ denote the interval between (and excluding) the $j^{th}$ and $(j+1)^{th}$ plays of the 
first arm.
We say that event $M(t)$ holds at time $t$, if $\theta_1(t)$ exceeds \remove{$\theta_i(t)$} \add{$\mu_i+\frac{\Delta_i}{2}$} of all the saturated arms, i.e., 
\remove{\begin{equation} 
M(t): \theta_1(t) > \max_{i\in C(t)} \theta_i(t). 
\end{equation}
}
\add{\begin{equation} 
M(t): \theta_1(t) > \max_{i\in C(t)} \mu_i + \frac{\Delta_i}{2}. 
\end{equation}
For $t$ such that $C(t)$ is empty, we define $M(t)$ to hold trivially.}

Let r.v. $\gamma_j$ denote the number of occurrences of event $M(t)$ in interval $I_j$:
\begin{equation} 
\mbox{$\gamma_j = |\{t \in I_j: M(t)=1\}|. $}
\end{equation}
Events $M(t)$ divide $I_j$ into sub-intervals in a natural way: 
For $\ell=2$ to $\gamma_j$, let r.v. $I_j(\ell)$ denote the sub-interval of $I_j$ between the $(\ell-1)^{th}$ and $\ell^{th}$ occurrences of event $M(t)$ in $I_j$ (excluding the time steps in which event $M(t)$ occurs). We also define $I_j(1)$ and $I_j(\gamma_j+1)$: If 
$\gamma_j>0$ then
$I_j(1)$ denotes the sub-interval in $I_j$ before the first occurrence of event $M(t)$ in $I_j$; and $I_j(\gamma_j+1)$ denotes the sub-interval in $I_j$ after the last occurrence of event $M(t)$ in $I_j$. For $\gamma_j=0$ we have $I_j(1) = I_j$. 

Figure \ref{fig:interval} shows an example of interval $I_j$ along with sub-intervals $I_j(\ell)$; in this figure $\gamma_j=4$.

\remove{Observe that since a saturated arm $i$ can be played at step $t$ only if $\theta_i(t)$ is greater than $\theta_1(t)$, 
saturated arms can 
only be played in the time steps belonging to intervals $I_j(\ell), \ell=1, \ldots, \gamma_j+1$ (notice,
however, that it is possible that at a step $t\in I_j(\ell)$ no saturated arm is played because an unsaturated arm has the greatest $\theta(t)$)
Then, the number of plays of saturated arms in interval $I_j$ is at most 
$$\mbox{$\sum_{\ell=1}^{\gamma_j+1} |I_j(\ell)| .$}$$}
\add{Observe that since a saturated arm $i$ can be played at step $t$ only if $\theta_i(t)$ is greater than $\theta_1(t)$, 
saturated arm $i$ can be played at a time step $t\notin I_j(\ell), \forall \ell, j$ (i.e., at a time step $t$ where $M(t)$ holds) only if $\theta_i(t) > \mu_i + \Delta_i/2$ . 
Let us define event $E(t)$ as
$$E(t): \ \ \{\theta_i(t) \in [\mu_i - \Delta_i/2, \mu_i + \Delta_i/2], \forall i\in C(t)\}.$$
Then, the number of plays of saturated arms in interval $I_j$ is at most 
\add{$$\mbox{$\sum_{\ell=1}^{\gamma_j+1} |I_j(\ell)| + \sum_{t\in I_j} I(\overline{E(t)}).$}$$}
 In words, $E(t)$ denotes the event that all saturated arms have $\theta_i(t)$ tightly concentrated around their means. Intuitively, from the definition of saturated arms, $E(t)$ should hold with high probability; we prove this in Lemma~\ref{lem:multiple-E(T)}}.

We are interested in bounding regret due to playing saturated arms, which depends not only on the number of plays,  but also on \emph{which} saturated arm is played at each time step. Let $V^{\ell, a}_j$ denote the number of steps in $I_j(\ell)$, for which $a$ is the best saturated arm, i.e. \vspace{-0.05in}
\begin{equation}
\mbox{$V^{\ell, a}_j = |\{t\in I_j(\ell) : \mu_a = \max_{i\in C(t)} \mu_i\}|, $}\vspace{-0.05in}
\end{equation}
(resolve the ties for best saturated arm using an arbitrary, but fixed, ordering on arms).
\remove{Note that $|I_j(\ell)| = \sum_{a=2}^N V^{\ell,a}_j.$}
In Figure \ref{fig:interval}, we illustrate this notation by showing steps $\{V^{4,a}_j\}$ for interval $I_j(4)$. In the example shown, we assume that $\mu_1 > \mu_2 > \cdots > \mu_6$, and that the suboptimal arms got added to the saturated set $C(t)$ in order $5, 3 , 4, 2, 6$, so that initially $5$ is the best saturated arm, then $3$ is the best saturated arm, and finally $2$ is the best saturated arm. 

\add{Recall that $M(t)$ holds trivially for all $t$ such that $C(t)$ is empty. Therefore, there is at least one saturated arm at all $t\in I_j(\ell)$, and hence $V^{\ell,a}_j, a=2,\ldots, N$ are well defined and cover the interval $I_j(\ell)$,
 $$\mbox{$|I_j(\ell)| = \sum_{a=2}^N V^{\ell,a}_j.$}$$
 }
\remove{Next, we will show that, with high probability, the regret due to playing a saturated arm during one of the $V^{\ell, a}_j$ steps is at most $3\Delta_a$. The idea is that since saturated arms have their $\theta_i(t)$s tightly concentrated around their means $\mu_i$, so with high probability, either the arm with the highest mean (i.e., the best saturated arm $a$) or an arm with  mean very close to $\mu_a$ will be chosen to be played during these $V^{\ell,a}_j$ steps.
More precisely, define $E(T)$ to be the event that for all $t \in [1,T]$, and for all $i\in C(t)$, $\mu_i -\Delta_i/2 \le \theta_i(t) \le \mu_i + \Delta_i/2$. Then, simple application of Chernoff-Hoeffding bounds, we have
\begin{lemma} \label{lem:multiple-E(T)}
$\Pr(E(T)) \ge 1-\frac{4(N-1)}{T}.$ 
\end{lemma}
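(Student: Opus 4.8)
The plan is to obtain the bound from two nested union bounds --- over the suboptimal arms and over time --- reducing everything to a single-arm, single-time-step concentration estimate that is the two-sided analogue of Lemma~\ref{lem:E2}.

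First I would write
\[
\Pr(\overline{E(T)}) \;\le\; \sum_{t=1}^{T}\sum_{i=2}^{N}\Pr\!\left(i\in C(t)\ \text{and}\ \theta_i(t)\notin[\mu_i-\tfrac{\Delta_i}{2},\ \mu_i+\tfrac{\Delta_i}{2}]\right),
\]
so that it suffices to show that for each fixed suboptimal arm $i$ and each fixed $t$ the inner probability is at most $4/T^2$; summing then yields $\Pr(\overline{E(T)})\le 4(N-1)/T$. Fix $i$ and $t$ and consider the upper-tail event $\{\theta_i(t)>\mu_i+\tfrac{\Delta_i}{2}\ \text{and}\ k_i(t)\ge L_i\}$ (the lower tail $\{\theta_i(t)<\mu_i-\tfrac{\Delta_i}{2}\}$ is handled symmetrically). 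I would split this event according to the empirical mean $S_i(t)/k_i(t)$: either $S_i(t)/k_i(t)>\mu_i+\tfrac{\Delta_i}{4}$, or $S_i(t)/k_i(t)\le\mu_i+\tfrac{\Delta_i}{4}$ while still $\theta_i(t)>\mu_i+\tfrac{\Delta_i}{2}$. In the first case, conditioned on $k_i(t)=k$ the count $S_i(t)$ is the sum of the first $k$ i.i.d.\ $\text{Bernoulli}(\mu_i)$ outcomes of arm $i$, so a Chernoff--Hoeffding bound together with a union over the at most $T$ possible values $k\in\{L_i,\dots,T\}$ of $k_i(t)$ bounds this case by $T\cdot e^{-2(\Delta_i/4)^2 L_i}=T\cdot e^{-3\ln T}=1/T^2$, using $L_i=24(\ln T)/\Delta_i^2$. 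In the second case, conditioning on $(k_i(t),S_i(t))=(k,s)$ with $s\le(\mu_i+\tfrac{\Delta_i}{4})k$ and applying Fact~\ref{fact:beta-binomial} gives $\Pr(\theta_i(t)>\mu_i+\tfrac{\Delta_i}{2}\mid k,s)=F^B_{k+1,\,\mu_i+\Delta_i/2}(s)$, a lower-tail binomial probability whose mean exceeds $s$ by at least $k\Delta_i/4$; a second Chernoff--Hoeffding bound makes this $e^{-\Omega(\Delta_i^2 k)}\le e^{-\Omega(\Delta_i^2 L_i)}=O(1/T^{3})$ uniformly over $k\ge L_i$, so no further union over $k$ is needed here. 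Adding the two cases and the symmetric lower-tail contribution gives the claimed $4/T^2$.

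The step I expect to require the most care is controlling the empirical mean of arm $i$ when the number of plays $k_i(t)$ is itself a random, adaptively chosen quantity: one must first note that the reward sequence of arm $i$ is i.i.d.\ $\text{Bernoulli}(\mu_i)$ regardless of the (data-dependent) times at which arm $i$ is played, and then absorb the resulting union over the values $k\in\{L_i,\dots,T\}$ of $k_i(t)$. This is exactly where the constant in $L_i=24(\ln T)/\Delta_i^2$ matters: it must be large enough that $e^{-2(\Delta_i/4)^2 L_i}\le 1/T^{3}$, so that the extra factor $T$ from the union is absorbed; a smaller constant (such as the $16$ appearing in earlier informal discussion) would leave a bound that does not decay in $T$ for the ``for all $t$'' event. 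Since the lemma is vacuous unless $T>4(N-1)$, no further hypothesis on $T$ is needed, and all of the estimates above are routine instances of the computation already performed in the appendix for Lemma~\ref{lem:E2}, now carried out two-sidedly and with $(\mu_2,\Delta)$ replaced throughout by $(\mu_i,\Delta_i)$.
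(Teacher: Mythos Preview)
Your proposal is correct and follows essentially the same route as the paper's proof: a union bound over $t\in[1,T]$ and over suboptimal arms $i$, then for each fixed $(t,i)$ a split according to whether the empirical mean $S_i(t)/k_i(t)$ is within $\Delta_i/4$ of $\mu_i$, handling the first case by Chernoff--Hoeffding on the i.i.d.\ reward sequence (with the union over $k\in\{L_i,\dots,T\}$ you describe) and the second case via Fact~\ref{fact:beta-binomial} plus another Chernoff--Hoeffding bound. The paper carries out the identical decomposition (with the event $A_i(t)$ playing the role of your empirical-mean split), the only cosmetic difference being that it also sums over $\ell\ge L_i$ in the Beta-tail case rather than observing, as you do, that the uniform $O(1/T^3)$ bound there makes that union unnecessary.
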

\begin{proof}
Refer to Appendix \ref{app:lem:multiple-E(T)} for details.	
\end{proof}}
\add{Next, we will show that the regret due to playing a saturated arm at a time step $t$ in one of the $V^{\ell, a}_j$ steps is at most $3\Delta_a + I(\overline{E(t)})$. The idea is that if all saturated arms have their $\theta_i(t)$ tightly concentrated around their means $\mu_i$, then either the arm with the highest mean (i.e., the best saturated arm $a$) or an arm with mean very close to $\mu_a$ will be chosen to be played during these $V^{\ell,a}_j$ steps. That is,}
\remove{Given that $E(T)$ holds, }
if a saturated arm $i$ is played at a time $t$ among one of the $V^{\ell,a}_j$ steps, 
then, \add{either $E(t)$ is violated, i.e. $\theta_{i'}(t)$ for some saturated arm $i'$ is not close to its mean, or} \vspace{-0.1in}
$$\mu_i + \Delta_i / 2 \ge \theta_i(t) \ge \theta_a(t) \ge \mu_a - \Delta_a/2, \vspace{-0.1in}$$
which implies that \vspace{-0.1in}
\begin{equation} 
\mbox{$\Delta_i = \mu_1 - \mu_i \le \mu_1 - \mu_a + \frac{\Delta_a}{2} + \frac{\Delta_i}{2} \Rightarrow \Delta_i \le 3\Delta_a$}.\vspace{-0.05in}
\end{equation}

\add{
Therefore, regret due to play of a saturated arm at a time $t$ in one of the $V^{\ell,a}_j$ steps is at most $3\Delta_a + I(\overline{E(t)})$. With slight abuse of notation let us use $t\in V^{\ell,a}_j$ to indicate that $t$ is one of the $V^{\ell,a}_j$ steps in $I_j(\ell)$. Then, the expected regret \emph{due to playing saturated arms} in interval $I_j$ is bounded as
\begin{eqnarray} 
\label{eq:R-V}
\mbox{$\ExR{{\cal R}^s(I_j)}$} &\le & \mbox{$\ExR{\sum_{\ell=1}^{\gamma_j+1} \sum_{a=2}^N  \sum_{t\in V^{\ell,a}_j} (3\Delta_a + I(\overline{E(t)}))} + \sum_{t\in I_j} I(\overline{E(t)}).$}\nonumber\\
 &= & \mbox{$\ExR{\sum_{\ell=1}^{\gamma_j+1} \sum_{a=2}^N 3\Delta_a V^{\ell,a}_j} + 2 \ExR{\sum_{t\in I_j} I(\overline{E(t)})}.$}
\end{eqnarray}
}
\remove{
Therefore, given event $E(T)$ regret due to play of a saturated arm in one of the $V^{\ell,a}_j$ steps is at most $3\Delta_a$, and thus, using Lemma~\ref{lem:multiple-E(T)},the expected regret \emph{due to playing saturated arms} in interval $I_j$ is bounded as,
\begin{equation} 
\label{eq:R-V}
\mbox{$\ExR{{\cal R}^s(I_j)} \le \ExR{\sum_{\ell=1}^{\gamma_j+1} \sum_{a=2}^N  (3\Delta_a) V^{\ell, a}_j \condR E(T)} + \frac{4N|I_j|}{T}.$}
\end{equation}
}
\add{The following lemma will be useful for bounding the second term on the right hand side in the above equation (as shown in the complete proof in Appendix~\ref{app:N-arms-details}).
\begin{lemma} \label{lem:multiple-E(T)}
For all $t$,
$$\Pr(E(t))\ge 1-\frac{4(N-1)}{T^2}.$$
Also, for all $t, j$, and $s\le j$,
$$\Pr(E(t) \ |\  s(j)=s) \ge 1-\frac{4(N-1)}{T^2}.$$
\end{lemma}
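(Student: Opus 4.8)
The plan is to prove the statement separately for each suboptimal arm $i\ne 1$ and then union bound over the $N-1$ such arms. Fix $i\ne 1$. Since the event $\overline{E(t)}$ occurs only if $\theta_i(t)\notin[\mu_i-\tfrac{\Delta_i}{2},\mu_i+\tfrac{\Delta_i}{2}]$ for some $i\in C(t)$, it is enough to show $\Pr\!\left(i\in C(t)\ \text{and}\ \theta_i(t)\notin[\mu_i-\tfrac{\Delta_i}{2},\mu_i+\tfrac{\Delta_i}{2}]\right)\le \tfrac{4}{T^2}$, and the analogous bound with every probability conditioned on $s(j)=s$. We may assume $L_i=\tfrac{24\ln T}{\Delta_i^2}\le T$, since otherwise arm $i$ is never saturated up to time $T$ and cannot affect $E(t)$; in particular this gives $\tfrac{1}{\Delta_i^2}\le\tfrac{T}{24\ln T}$, which we use at the end to absorb factors of $\tfrac{1}{\Delta_i^2}$. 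Let $s_i(m)$ denote the number of successes among the first $m$ plays of arm $i$; because the reward stream of arm $i$ is i.i.d.\ $\mbox{Bernoulli}(\mu_i)$, $s_i(m)$ is unconditionally $\mbox{Binomial}(m,\mu_i)$, and $S_i(t)=s_i(k_i(t))$. Also, given the history up to time $t$ (in particular given $k_i(t)$ and $S_i(t)$), $\theta_i(t)$ is a fresh $\Beta(S_i(t)+1,\,k_i(t)-S_i(t)+1)$ sample.

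First I would decompose over the value of $k_i(t)$. Since $\{i\in C(t)\}=\{k_i(t)\ge L_i\}$, the event $\{i\in C(t),\ \theta_i(t)>\mu_i+\tfrac{\Delta_i}{2}\}$ is the disjoint union over $k\ge L_i$ of $\{k_i(t)=k,\ \theta_i(t)>\mu_i+\tfrac{\Delta_i}{2}\}$. For fixed $k$, conditioning on $(k_i(t),S_i(t))=(k,s)$ and applying Fact~\ref{fact:beta-binomial} gives $\Pr(\theta_i(t)>\mu_i+\tfrac{\Delta_i}{2}\mid k_i(t)=k,\ S_i(t)=s)=F^B_{k+1,\,\mu_i+\Delta_i/2}(s)$; averaging over $s=s_i(k)$ and using $\Pr(s_i(k)=s,\ k_i(t)=k)\le\Pr(s_i(k)=s)$ bounds the $k$-th term by $\Pr\big(\mbox{Binomial}(k+1,\mu_i+\tfrac{\Delta_i}{2})\le\mbox{Binomial}(k,\mu_i)\big)$ for independent binomials. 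Their means differ by at least $k\Delta_i/2$, so Chernoff--Hoeffding bounds this by $e^{-\Omega(k\Delta_i^2)}$; summing this geometric series over $k\ge L_i$ and plugging in $L_i\Delta_i^2=24\ln T$ together with $\tfrac{1}{\Delta_i^2}\le\tfrac{T}{24\ln T}$ yields a bound of $O(T^{-2})$. The lower tail is handled symmetrically: Fact~\ref{fact:beta-binomial} gives $\Pr(\theta_i(t)<\mu_i-\tfrac{\Delta_i}{2}\mid k_i(t)=k,\ S_i(t)=s)=1-F^B_{k+1,\,\mu_i-\Delta_i/2}(s)$, which the same argument bounds using $\Pr\big(\mbox{Binomial}(k+1,\mu_i-\tfrac{\Delta_i}{2})\ge\mbox{Binomial}(k,\mu_i)\big)$. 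Adding the two tails and union bounding over the $N-1$ suboptimal arms gives $\Pr(\overline{E(t)})\le\tfrac{4(N-1)}{T^2}$.

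For the conditional statement, note that $s(j)$ is a function of the reward stream of the \emph{first} arm alone, which is independent of (i) the reward streams of all arms $i\ne 1$, hence of every $s_i(\cdot)$, and (ii) the fresh randomness the algorithm uses to draw the samples $\theta_i(t)$. Consequently, conditioning on $s(j)=s$ changes neither the marginal law of $s_i(k)$ (still $\mbox{Binomial}(k,\mu_i)$) nor the conditional law of $\theta_i(t)$ given $(k_i(t),S_i(t))$, so every inequality in the previous paragraph remains valid with $\Pr(\cdot)$ replaced by $\Pr(\cdot\mid s(j)=s)$, giving the same bound $\tfrac{4(N-1)}{T^2}$.

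The one genuine subtlety — and the only obstacle worth flagging — is that $k_i(t)$ is itself random and in general \emph{correlated} with $S_i(t)=s_i(k_i(t))$, so one cannot naively claim that $S_i(t)$ conditioned on $\{k_i(t)=k\}$ is $\mbox{Binomial}(k,\mu_i)$. The decomposition above sidesteps this: we only ever use the \emph{unconditional} law of $s_i(k)$ (after discarding the conjunct $\{k_i(t)=k\}$, which only decreases the probability) and the conditional law of $\theta_i(t)$ given the full pre-$t$ history, which is a clean $\Beta$ thanks to the algorithm's fresh randomness. Everything else is routine Chernoff--Hoeffding bookkeeping; the specific constant in $L_i=\tfrac{24\ln T}{\Delta_i^2}$ is needed only to make the exponent $\Omega(L_i\Delta_i^2)=\Omega(\ln T)$ large enough — concretely at least $2\ln T$ — to dominate $T^{-2}$ after the geometric sum over $k$.
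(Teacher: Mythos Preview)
Your proposal is correct and handles the dependence between $k_i(t)$ and $S_i(t)$ cleanly. The overall scaffolding matches the paper's proof (union bound over $i\ne 1$, sum over the possible values $k\ge L_i$ of $k_i(t)$, reduce conditioning on $s(j)$ to independence of arm~$1$'s reward stream from arm~$i$'s), but the inner step is genuinely different.

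The paper introduces an intermediate event $A_i(t)=\{S_i(t)/k_i(t)\le \mu_i+\Delta_i/4\}$ and splits each tail into two pieces: (i) the empirical mean deviates from $\mu_i$ by more than $\Delta_i/4$, and (ii) given $A_i(t)$, the Beta sample $\theta_i(t)$ deviates from the empirical mean by more than $\Delta_i/4$. Each piece is bounded by a separate Chernoff--Hoeffding application, yielding two $1/T^2$ terms per tail, hence $4/T^2$ per arm. Your route skips the intermediate event: after invoking Fact~\ref{fact:beta-binomial} you average $F^B_{k+1,\mu_i+\Delta_i/2}(s)$ against the \emph{unconditional} law of $s_i(k)$ to get $\Pr\big(\mathrm{Bin}(k+1,\mu_i+\Delta_i/2)\le \mathrm{Bin}(k,\mu_i)\big)$ directly, then apply Hoeffding once to the difference of the two binomials. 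This is tighter and more compact (indeed, with the constant $24$ in $L_i$ your exponent at $k=L_i$ is about $6\ln T$, so even the crude bound $T\cdot e^{-L_i\Delta_i^2/4}\le T^{-5}$ already beats $2/T^2$), while the paper's two-stage split makes the separate roles of ``empirical mean concentration'' and ``posterior concentration around the empirical mean'' explicit. The treatment of the conditional statement is the same in spirit: both arguments rely only on the independence of arm~$i$'s reward stream from arm~$1$'s, so conditioning on $s(j)=s$ (or, in the paper, on the full sequence $Z_{1,1},\dots,Z_{1,j}$) leaves the relevant distributions unchanged.
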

\begin{proof}
Refer to Appendix \ref{app:lem:multiple-E(T)}.	
\end{proof}}
The stronger bound given by the second statement of lemma above will be useful later in bounding the first term on the rhs of \eqref{eq:R-V}.
For bounding that term, we establish the following lemma.
\begin{lemma}
\label{lem:GammaV-dep} 
For all $j$,
\remove{
\begin{equation}\vspace{-0.15in}
\label{eq:GammaV-dep}
\mbox{$\ExR{\sum_{\ell=1}^{\gamma_j+1} \sum_a V_j^{\ell,a} \Delta_a|	\ \ E(T)} \le  \ExR{ (\gamma_j+1) \sum_{a=2}^N  \Delta_a \ExB{\min\{X(j,s(j),\mu_a + \frac{\Delta_a}{2}), T\}\condB s(j)} \condR E(T)} $}
\end{equation}
}
\add{
\begin{equation}\vspace{-0.15in}
\label{eq:GammaV-dep}
\mbox{$
\ExR{\sum_{\ell=1}^{\gamma_j+1} \sum_a V_j^{\ell,a} \Delta_a} \le  \ExR{ \ExB{(\gamma_j+1) \condB s(j) }\sum_{a=2}^N  \Delta_a \ExB{\min\{X(j,s(j),\mu_a + \frac{\Delta_a}{2}), T\}\condB s(j)}} 
$}
\end{equation}
}
\end{lemma}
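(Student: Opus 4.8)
The plan is to rewrite the left‑hand side as a sum of contributions of ``best saturated arm'' blocks, bound each block from above — conditionally on the past — by a truncated geometric random variable $\min\{X(j,s(j),\mu_a+\tfrac{\Delta_a}{2}),T\}$, and then add up these bounds over the $\gamma_j+1$ sub‑intervals of $I_j$ by an optional‑stopping (Wald‑type) argument in which $s(j)$ is conditioned on throughout. First I would observe that $\bigcup_{\ell=1}^{\gamma_j+1}I_j(\ell)=\{t\in I_j:\overline{M(t)}\}$, so the left‑hand side equals $\sum_{t\in I_j:\,\overline{M(t)}}\Delta_{a(t)}$, where $a(t)$ is the best saturated arm at step $t$ (so $\mu_{a(t)}=\max_{i\in C(t)}\mu_i$, ties broken by the fixed ordering; note $\overline{M(t)}$ forces $C(t)\ne\emptyset$). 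Because $C(t)$ only grows with $t$, the arm $a(t)$ runs through a sequence of arms of increasing mean, so for each arm $a$ the steps with $a(t)=a$ form one contiguous time block, and within a single sub‑interval $I_j(\ell)$ arm $a$ contributes one contiguous block of $V_j^{\ell,a}$ steps. On every step $t$ of such a block $\overline{M(t)}$ holds and $\mu_a=\max_{i\in C(t)}\mu_i$; since $\mu_i+\tfrac{\Delta_i}{2}=\tfrac12(\mu_1+\mu_i)$ is increasing in $\mu_i$, this gives $\mu_a+\tfrac{\Delta_a}{2}=\max_{i\in C(t)}(\mu_i+\tfrac{\Delta_i}{2})$, hence $\theta_1(t)\le\mu_a+\tfrac{\Delta_a}{2}$.

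Next I would introduce the filtration $\{\mathcal{F}_{\ell-1}\}_{\ell\ge1}$, where $\mathcal{F}_{\ell-1}$ is all information generated up to and including the step at which $I_j(\ell)$ begins; in particular $t_j$, and hence $s(j)$, is $\mathcal{F}_0$‑measurable. The crucial point is that throughout $I_j$ the posterior of arm $1$ is frozen at $\mbox{Beta}(s(j)+1,\,j-s(j)+1)$, so $\theta_1(t)$ is a fresh independent draw at every step $t\in I_j$, regardless of the past. Consequently a best‑saturated‑arm block for $a$ ends no later than the first step at which $\theta_1(t)>\mu_a+\tfrac{\Delta_a}{2}$ (it may end earlier, when a better arm becomes saturated or when $I_j$ ends), so — conditionally on $\mathcal{F}_{\ell-1}$ and on the evolution inside $I_j(\ell)$ up to the start of that block — its length is stochastically dominated by a copy of $X(j,s(j),\mu_a+\tfrac{\Delta_a}{2})$, and hence, since $|I_j|\le T$, by $\min\{X(j,s(j),\mu_a+\tfrac{\Delta_a}{2}),T\}$. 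Coupling the (at most $N-1$) blocks in $I_j(\ell)$ with independent such truncated geometrics, and using $V_j^{\ell,a}=0$ when $a$ never becomes best saturated in $I_j(\ell)$, I get
$$\ExB{\sum_{a=2}^N V_j^{\ell,a}\Delta_a\ \Big|\ \mathcal{F}_{\ell-1}}\ \le\ \sum_{a=2}^N\Delta_a\,\ExB{\min\{X(j,s(j),\mu_a+\tfrac{\Delta_a}{2}),T\}\ \Big|\ s(j)}.$$

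Finally I would sum over $\ell=1,\dots,\gamma_j+1$. The number of sub‑intervals is exactly $\gamma_j+1$, and $\{\gamma_j+1\ge\ell\}\in\mathcal{F}_{\ell-1}$ (when $I_j(\ell)$ begins we already know it exists), so, writing $\mathcal{G}=\sigma(s(j))\subseteq\mathcal{F}_{\ell-1}$ and noting the right‑hand side above is $\mathcal{G}$‑measurable, the tower property yields
$$\ExB{\sum_{\ell=1}^{\gamma_j+1}\sum_a V_j^{\ell,a}\Delta_a\ \Big|\ \mathcal{G}}\ \le\ \Big(\sum_{\ell\ge1}\ExB{I(\gamma_j+1\ge\ell)\ \Big|\ \mathcal{G}}\Big)\sum_{a=2}^N\Delta_a\,\ExB{\min\{X(j,s(j),\mu_a+\tfrac{\Delta_a}{2}),T\}\ \Big|\ s(j)},$$
and $\sum_{\ell\ge1}\ExB{I(\gamma_j+1\ge\ell)\mid\mathcal{G}}=\ExB{\gamma_j+1\mid s(j)}$; taking one more expectation gives the lemma. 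The main obstacle is making this bookkeeping airtight: one must verify that the per‑block geometric domination holds \emph{conditionally on the entire past} (this is precisely where the freezing of arm $1$'s posterior over $I_j$ enters), that truncation at $T$ is legitimate (it is, since $|I_j|\le T$), that cutting a block short by a saturation event or by the end of $I_j$ can only shorten it, and that $s(j)$ is revealed early enough to be treated as a constant when the geometric expectations are pulled outside the sum. The remaining steps are routine.
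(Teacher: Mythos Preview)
Your proposal is correct and follows essentially the same route as the paper's own proof: condition on $s(j)$, introduce the filtration $\mathcal{F}_{\ell-1}$, use that $I(\gamma_j\ge\ell-1)$ is $\mathcal{F}_{\ell-1}$-measurable, bound each $V_j^{\ell,a}$ by stochastic domination with $\min\{X(j,s(j),\mu_a+\tfrac{\Delta_a}{2}),T\}$ (exploiting that arm~1's posterior is frozen on $I_j$ so $\theta_1(t)$ are fresh iid Beta draws), and then pull the $s(j)$-measurable geometric expectation outside via the tower property. Your version is, if anything, slightly more careful than the paper's in conditioning additionally on the evolution inside $I_j(\ell)$ up to the start of each $a$-block, which cleanly handles the fact that the block for $a$ need not begin at the start of $I_j(\ell)$.
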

\begin{proof}
The key observation used in proving this lemma is that given \remove{event $E(T)$ and } a fixed value of $s(j)=s$, the random variable $V^{\ell, a}_j$ is stochastically dominated by random variable $X(j, s, \mu_a+\frac{\Delta_a}{2})$ (defined earlier as a geometric variable denoting the number of trials before an independent sample from $\mbox{Beta}(s+1,j-s+1)$ distribution exceeds $\mu_a+\frac{\Delta_a}{2}$). A technical difficulty in deriving the inequality above is that the random variables $\gamma_j$ and $V^{\ell,a}_j$ are not independent in general (both depend on the values taken by $\{\theta_i(t)\}$ over the interval). This issue is handled through careful conditioning of the random variables on history. The details of the proof are provided in Appendix \ref{app:GammaV-dep}.
\end{proof}


Now using the above lemma the first term in \eqref{eq:R-V} can be bounded by
\begin{equation} 
\mbox{$
3 \ExR{ \sum_{j=0}^{T-1} \ExB{\gamma_j \condB s(j)} \sum_a \Delta_a \ExB{\min\{X(j,s(j), y_a), T\}\condB s(j)}} + 
3 \ExR{ \sum_a \Delta_a \ExB{\min\{X(j,s(j), y_a), T\}\condB s(j)}}. $}\nonumber
\end{equation}

We next show how to bound the first term in this equation; the second term will be dealt with in the complete proof in Appendix~\ref{app:N-arms-details}.

Recall that $\gamma_j$ denotes the number of occurrences of event $M(t)$ in interval $I_j$, i.e. the number of times in interval $I_j$,  $\theta_1(t)$ was greater than \remove{$\theta_i(t)$} \add{$\mu_i+\frac{\Delta_i}{2}$} of all saturated arms $i\in C(t)$, and yet the first arm was not played. The only reasons the first arm would not be played at a time $t$ despite of $\theta_1(t) > \max_{i\in C(t)} \mu_i+\frac{\Delta_i}{2}$  \add{are that either $E(t)$ was violated, i.e. some saturated arm whose $\theta_i(t)$ was not close to its mean was played instead; or} some unsaturated arm
$u$ with highest $\theta_u(t)$ was played. \remove{And, since an unsaturated arm $u$ can be played for at most $L_u$ times before it becomes saturated,}
\add{Therefore, the random variables $\gamma_j$ satisfy 
\begin{equation}
\mbox{$\gamma_j \le \sum_{t\in I_j} I(\mbox{an unstaurated arm is played at time $t$}) + \sum_{t\in I_j}I(\overline{E(t)}).$} \nonumber
\end{equation}
Using Lemma \ref{lem:multiple-E(T)}, and the fact that an unsaturated arm $u$ can be played at most $L_u$ times before it becomes saturated, we obtain that 
\begin{eqnarray}
\label{eq:ub-gamma}
\mbox{$\sum_{j=0}^{T-1} \Ex[\gamma_j|s(j)]$} & \le & \mbox{$\Ex[\sum_{t=1}^T I(\mbox{an unstaurated arm is played at time $t$})|s(j)]+ \sum_{j=0}^{T-1} \Ex[\sum_{t\in I_j} I(\overline{E(t)}) | s(j)]$}\nonumber\\
& \le & \mbox{$\sum_u L_u + \sum_{j=0}^{T-1} \sum_{t=1}^T \Pr(\overline{E(t)} | s(j))$}\nonumber\\
&  \le & \mbox{$\sum_u L_u +4(N-1).$}
\end{eqnarray}
}

Note that $\sum_{j=0}^{T-1} \Ex[\gamma_j|s(j)]$ is a r.v. (because of random $s(j)$), and the above bound applies
for all instantiations of this r.v.

Let $y_a = \mu_a+\frac{\Delta_a}{2}$. Then,
\remove{Since the random variables $\ExR{X(j,s(j),y_a)\condR s(j)}$ and $\gamma_j$ are not independent (both of them depend on $s(j)$), we are not able to use the implied bound on $\ExR{\sum_j \gamma_j}$ directly to bound $\sum_j \ExR{\gamma_j \ExB{X(j,s(j), y_a)\condB s(j)}}$. Instead, we use the above bound to argue that the worst case is the one with $\gamma_{j^*_a} = \sum_u L_u$, and $\gamma_j=0$, for all $j\ne j^*_a$, where
 \vspace{-0.1in}
$$j^*_a=\arg \max_{j \in \{0, \ldots, T-1\}} \ExB{X(j,s(j), y_a)\condB s(j)} = \arg \max_{j \in \{0, \ldots, T-1\}} \frac{1}{F_{j+1, y_a}(s(j))}. $$
Note that $j^*_a$ is a random variable, which is completely determined by the instantiation of random sequence $s(1), s(2), \ldots$. More precisely,
}
\begin{eqnarray}
\label{eq:tmp3}
& & \hspace{-0.25in} \mbox{$ \ExR{ \sum_{j=0}^{T-1} \ExB{\gamma_j \condB s(j)} \sum_a \Delta_a \ExB{X(j,s(j), y_a)\condB s(j)} \remove{\condR E(T)}}$} \nonumber\\
& \le & \mbox{$ \ExR{ \left(\sum_{j=0}^{T-1} \ExB{\gamma_j \condB s(j)} \right) \left( \max_j \sum_a \Delta_a \ExB{X(j,s(j), y_a)\condB s(j)}\right) \remove{\condR E(T)}}$} \nonumber\\
& \le & \mbox{$(\sum_u L_u \add{+4(N-1)}) \sum_a  \Delta_a \ExR{\max_j \ExB{X(j,s(j), y_a)\condB s(j)}}$} \nonumber\\
& \le & \mbox{$(\sum_u L_u \add{+4(N-1)}) \sum_a   \Delta_a \ExR{\frac{\Delta_a}{F_{j^*_a+1, y_a}(s(j^*_a))}\cdot I(s(j^*_a) \le \lfloor y_aj^*_a\rfloor) + \frac{\Delta_a}{F_{j^*_a+1, y_a}(s(j^*_a))}\cdot I(s(j^*_a) \ge \lceil y_aj^*_a\rceil)}$}, \vspace{-0.1in}\nonumber\\
\end{eqnarray}
\add{where
 \vspace{-0.1in}
$$j^*_a=\arg \max_{j \in \{0, \ldots, T-1\}} \ExB{X(j,s(j), y_a)\condB s(j)} = \arg \max_{j \in \{0, \ldots, T-1\}} \frac{1}{F_{j+1, y_a}(s(j))}. $$
Note that $j^*_a$ is a random variable, which is completely determined by the instantiation of random sequence $s(1), s(2), \ldots$.}
 Now, for the first term in above, 
\begin{eqnarray}
\mbox{$\ExR{\frac{1}{F_{j^*_a+1, y_a}(s(j^*_a))} \cdot I(s(j^*_a) \le \lfloor y_aj^*_a\rfloor)} $}& \le & \mbox{$\sum_j \ExR{\frac{1}{F_{j+1, y_a}(s(j))}\cdot I(s(j) \le \lfloor y_aj\rfloor)}$} \nonumber\\
& = & \mbox{$\sum_j \sum_{s=0}^{\lfloor y_aj\rfloor} \frac{f_{j,\mu_1}(s)}{F_{j+1, y_a}(s)}$} \nonumber\\
& \le & \mbox{$\sum_j \frac{\mu_1}{\Delta'_a} e^{-D_a j} \le \frac{16}{\Delta_a^3}$},\nonumber
\end{eqnarray}
where $\Delta'_a = \mu_1-y_a = \Delta_a/2$, $D_a$ is the KL-divergence between Bernoulli distributions with parameters $\mu_1$ and $y_a$. The penultimate inequality follows using \eqref{eq:smalls-bound} in the proof of Lemma \ref{lem:exj} \add{in Appendix \ref{app:lem:exj}}, with $\Delta'=\Delta'_a$, and $D = D_a$. The resulting bound on the first term in \eqref{eq:tmp3} is $O((\sum_u L_u)\sum_a \frac{\Delta_a}{\Delta_a^3}) = O(\left(\sum_a \frac{1}{\Delta_a^2}\right)^2 \ln T)$, which forms the dominating term in our regret bound. The bounds on the remaining terms and further details of the proof for regret due to saturated arms are provided in Appendix~\ref{app:N-arms-details}. 
Since an unsaturated arm $u$ becomes saturated after $L_u$ plays, regret due to unsaturated arms is at most $\sum_{u=2}^N L_u \Delta_u  = \remove{16}\add{24} (\ln T) \left(\sum_{u=2}^N \frac{1}{\Delta_u}\right)$. Summing the regret due to saturated and unsaturated arms, we obtain the result of Theorem~\ref{th:N-arms}.
\paragraph{Conclusion.}
In this paper, we showed theoretical guarantees for Thompson Sampling close to other state of the art methods, like UCB. 
Our result is a first step in theoretical understanding of TS and there are several avenues to explore for
the future work: There is a gap between our upper bounds and the lower bound of \cite{LaiR}.  While it may be easy to improve the constant factors in our upper bounds by making the analysis more careful (but more complicated), it seems harder to improve the dependence on the $\Delta$'s. With further work, we hope that our techniques in this paper will be useful in providing several extensions, including analysis of TS for delayed and batched feedbacks, contextual bandits, prior mismatch and posterior reshaping discussed in \cite{CLi}. As mentioned before, empirically TS has been shown to have superior performance than other methods, especially for handling delayed feedback. 
A theoretical justification of this observation would require a tighter analysis of TS than what we have achieved here, and in addition, it would require lower bound on the regret of the other algorithms. 
TS has also been used for problems such as regularized logistic regression (see \cite{CLi}). 
These multi-parameter settings lack theoretical analysis.


\bibliography{Thompson_bib}
\bibliographystyle{abbrv} 

\appendix

\section{Multiple optimal arms}
\label{app:mul-opt}
Consider the $N$-armed bandit problem with $\mu^*=\max_i \mu_i$. We will show that adding another arm with expected reward $\mu^*$ can only decrease the expected regret of TS algorithm. Suppose that we added arm $N+1$ with expected reward $\mu^*$. Consider the expected regret for the new bandit in time $T$, conditioned on the exact time steps among $1, \ldots, T$, on which arm $N+1$ is played by the algorithm. Since the arm $N+1$ has expected reward $\mu^*$, there is no regret in these time steps. Now observe that in the remaining time steps, the algorithm behaves exactly as it would for the original bandit with $N$ arms. Therefore, given that the $(N+1)^{th}$ arm is played $x$ times, the expected regret in time $T$ for the new bandit will be same as the expected regret in time $T-x$ for the original bandit. Let ${\cal R}^{N}(T)$ and ${\cal R}^{N+1}(T)$ denote the expected regret in time $T$ for the original and new bandit, respectively. Then,
{\small
\begin{eqnarray*}
\ExR{{\cal R}^{N+1}(T)} = \ExR{\ExB{{\cal R}^{N+1}(T) \condB k_{N+1}(T)}} & = & \ExR{\ExB{{\cal R}^{N}(T-k_{N+1}(T)) \condB k_{N+1}(T)}}\\
& \le & \ExR{\ExB{{\cal R}^{N}(T) \condB k_{N+1}(T)}} = \ExR{{\cal R}^N(T)}.
\end{eqnarray*}
}
This argument shows that the expected regret of Thompson Sampling for the $N$-armed bandit problem with $r$ optimal arms is bounded by the expected regret of Thompson Sampling for the $(N-r+1)$-armed bandit problem obtained on removing (any) $r-1$ of the optimal arms.

\section{Facts used in the analysis}
\label{app:facts}
\begin{fact}
\label{fact:beta-binomial}
$$ F^{beta}_{\alpha, \beta}(y) = 1-F^B_{\alpha+\beta-1,y}(\alpha-1),$$
for all positive integers $\alpha, \beta$.
\end{fact}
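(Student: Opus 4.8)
The plan is to prove this classical identity via a probabilistic argument using order statistics of i.i.d.\ uniform random variables. First I would recall that if $U_1,\dots,U_n$ are i.i.d.\ uniform on $[0,1]$ with $n=\alpha+\beta-1$, then the $\alpha$-th order statistic $U_{(\alpha)}$ (the $\alpha$-th smallest among the $U_i$) has density $\frac{\Gamma(\alpha+\beta)}{\Gamma(\alpha)\Gamma(\beta)}x^{\alpha-1}(1-x)^{\beta-1}$ on $(0,1)$, i.e.\ $U_{(\alpha)}\sim\Beta(\alpha,\beta)$; this is the standard formula for the density of the $k$-th order statistic of $n$ i.i.d.\ samples, specialized to the uniform distribution with $k=\alpha$ and $n-k+1=\beta$. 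Hence $F^{beta}_{\alpha,\beta}(y)=\Pr[U_{(\alpha)}\le y]$.

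Next I would observe that the event $\{U_{(\alpha)}\le y\}$ holds if and only if at least $\alpha$ of the $n$ variables lie in $[0,y]$. Since $\Pr[U_i\le y]=y$ independently across $i$, the number of indices with $U_i\le y$ is distributed as $\mathrm{Binomial}(n,y)$ with $n=\alpha+\beta-1$. Therefore
$$\Pr[U_{(\alpha)}\le y]=\Pr[\mathrm{Binomial}(\alpha+\beta-1,y)\ge\alpha]=1-F^B_{\alpha+\beta-1,y}(\alpha-1),$$
which is exactly the claimed identity.

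An alternative, purely analytic route would be to start from $F^{beta}_{\alpha,\beta}(y)=\int_0^y \frac{\Gamma(\alpha+\beta)}{\Gamma(\alpha)\Gamma(\beta)}x^{\alpha-1}(1-x)^{\beta-1}\,dx$ and integrate by parts $\beta-1$ times, or equivalently differentiate both sides in $y$ and check that the derivative of $1-F^B_{\alpha+\beta-1,y}(\alpha-1)$ telescopes to the $\Beta(\alpha,\beta)$ density (using $j\binom{n}{j}=n\binom{n-1}{j-1}$), with both sides vanishing at $y=0$. I expect the only mild obstacle to be bookkeeping: stating the order-statistic density correctly for integer parameters and making the index shift $n=\alpha+\beta-1$, $k=\alpha$ line up with ``$\ge\alpha$'' versus ``$\le\alpha-1$'' in the binomial CDF. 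Since $\alpha,\beta$ are positive integers, all Gamma factors are factorials and there are no convergence subtleties, so the argument reduces to a short calculation once the order-statistic fact is invoked.
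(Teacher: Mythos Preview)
Your proposal is correct and follows essentially the same route as the paper: both identify $\Beta(\alpha,\beta)$ as the law of the $\alpha$-th order statistic among $\alpha+\beta-1$ i.i.d.\ uniform samples, and then read off the cdf by noting that $U_{(\alpha)}\le y$ iff at least $\alpha$ of the samples fall in $[0,y]$, a $\mathrm{Binomial}(\alpha+\beta-1,y)$ event. The paper also alludes to the integration-by-parts alternative you mention.
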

\begin{proof}
This fact is well-known (it's mentioned on Wikipedia) but we are not aware of a specific reference.
Since the proof is easy and short we will present a proof here.  
The Wikipedia page also mentions that it can be proved using integration by parts. Here we provide
a direct combinatorial proof which may be new.

One well-known way to generate a r.v. with cdf $F^{beta}_{\alpha, \beta}$ for integer $\alpha$ and $\beta$ 
is the following: generate uniform
in $[0,1]$ r.v.s $X_1, X_2, \ldots, X_{\alpha+\beta-1}$ independently. Let the values of these r.v. in 
sorted increasing order be denoted $X_1^{\uparrow}, X_2^{\uparrow}, \ldots, X_{\alpha+\beta-1}^{\uparrow}$. Then 
$X_{\alpha}^{\uparrow}$ has cdf $F^{beta}_{\alpha, \beta}$. 
Thus $F^{beta}_{\alpha, \beta}(y)$ is the probability that $X_{\alpha}^{\uparrow} \leq y$.

We now reinterpret this probability using the binomial distribution: The event 
$X_{\alpha}^{\uparrow} \leq y$ happens iff for at least $\alpha$ of the $X_1, \ldots, X_{\alpha+\beta-1}$  we
have $X_i \leq y$. 
For each $X_i$ we have $\Pr[X_i \le y] = y$; thus the probability that for at most $\alpha-1$ of the 
$X_i$'s we have $X_i \leq y$ is $F^{B}_{\alpha+\beta-1, y}(\alpha-1)$.  And so the probability that for
at least $\alpha$ 
of the $X_i$'s we have $X_i \leq y$ is $1-F^{B}_{\alpha+\beta-1, y}(\alpha-1)$.
\end{proof}
\comment{
\begin{proof}
 Let $n=\alpha+\beta-1$.
  \begin{eqnarray*}
  F^{beta}_{\alpha, \beta}(p) & = & \frac{\Gamma(\alpha+\beta)}{\Gamma(\alpha)\Gamma(\beta)}\int_{0}^p x^{\alpha-1} (1-x)^{\beta-1}\\
  & = & \frac{n!}{(\alpha-1)! (n-\alpha)!} \int_{0}^p x^{\alpha-1} (1-x)^{n-\alpha}\\
  & = & {n \choose \alpha} \alpha \int_{0}^p x^{\alpha-1} (1-x)^{n-\alpha} 
  \end{eqnarray*}
  Applying integration by parts,
  \begin{eqnarray*}
   F^{beta}_{\alpha, \beta}(p) & = & {n \choose \alpha} p^\alpha (1-p)^{n-\alpha} + {n \choose \alpha} (n-\alpha) \int_{0}^p x^\alpha (1-x)^{n-\alpha-1}\\
  & = & f_{n,p}(\alpha) + {n \choose \alpha} (n-\alpha) \int_{0}^p x^\alpha (1-x)^{n-\alpha-1}
  \end{eqnarray*}
  
  where $f^B_{n,p}(y)$ denote the probability of $X=y$ for a $\mbox{Binomial}(n,p)$ distributed random variable $X$. On applying integration by parts $i$ times,
  
  \begin{eqnarray*}
  F_{\alpha, \beta}^{beta}(p) &= & \sum_{j=0}^i f(\alpha+j) + {n \choose \alpha+i} (n-\alpha-i) \int_{0}^p x^{\alpha+i} (1-x)^{n-\alpha-1-i}\\
  & = & \sum_{j=0}^i f_{n,p}(\alpha+j) + {n \choose \alpha+i} (n-\alpha-i) \frac{x^{\alpha+i+1}}{\alpha+i+1} (1-x)^{n-\alpha-1-i}\\
  & & \hspace{0.5in}  + {n \choose \alpha+i} \frac{(n-\alpha-i)(n-\alpha-1-i)}{\alpha+i+1} \int_{0}^p x^{\alpha+i+1} (1-x)^{n-\alpha-i-2}\\
  & = & \sum_{j=0}^i f_{n,p}(\alpha+j) + {n \choose \alpha+i+1} x^{\alpha+i+1} (1-x)^{n-\alpha-i-1} \nonumber\\
  & & \hspace{0.5in} + {n \choose \alpha+i+1} (n-\alpha-i-1) \int_{0}^p x^{\alpha+i+1} (1-x)^{n-\alpha-2-i}\\
  & = & \sum_{j=0}^{i+1} f_{n,p}(\alpha+j) + {n \choose \alpha+i+1} (n-\alpha-i-1) \int_{0}^p x^{\alpha+i+1} (1-x)^{n-\alpha-2-i}
  \end{eqnarray*}
  
  Therefore, at step $i=n-\alpha$,
  
  $$F^{beta}_{\alpha, \beta}(p) =  \sum_{j=0}^{n-\alpha} f(\alpha+j) = 1-F^B(\alpha-1)$$
  
  \end{proof} 
}

The median of an integer-valued random variable $X$ is an integer $m$ such that 
$\Pr(X \leq m) \geq 1/2$ and $\Pr(X \geq m) \geq 1/2$. The following fact says that the median of the binomial distribution is close to its mean.

\begin{fact}[\cite{JogdeoS}] \label{fact:median}
Median of the binomial distribution $\text{Binomial}(n, p)$ is either $\lfloor np \rfloor$
or $\lceil np \rceil$. 
\end{fact}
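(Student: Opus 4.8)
The plan is to convert the two inequalities that define a binomial median into statements about the median of a Beta distribution via Fact~\ref{fact:beta-binomial}, and then to reduce to the special case of a binomial with integer mean. Throughout write $X\sim\mathrm{Binomial}(n,p)$, and for positive integers $a,b$ let $M(a,b)$ denote the median of $\mathrm{Beta}(a,b)$ (which is unique, since the Beta density is positive on $(0,1)$, so that $F^{beta}_{a,b}(x)\le\tfrac12$ iff $x\le M(a,b)$). The cases $p\in\{0,1\}$ are trivial, with median $0$ or $n$, so fix $0<p<1$.

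\textbf{Step 1: reformulation.} An integer $m$ is a median of $X$ iff $\Pr(X\le m)\ge\tfrac12$ and $\Pr(X\le m-1)\le\tfrac12$. For $1\le m\le n-1$, Fact~\ref{fact:beta-binomial} gives $F^B_{n,p}(m)=1-F^{beta}_{m+1,n-m}(p)$ and $F^B_{n,p}(m-1)=1-F^{beta}_{m,n-m+1}(p)$, so these two conditions are respectively equivalent to $p\le M(m+1,n-m)$ and $p\ge M(m,n-m+1)$. Combined with the reflection identity $M(a,b)=1-M(b,a)$ (which follows from $1-\mathrm{Beta}(b,a)$ having the $\mathrm{Beta}(a,b)$ law), the whole statement reduces to the single inequality $(\star)$: $M(a,b)\ge\frac{a-1}{a+b-1}$ for all integers $a\ge1,\ b\ge1$ (the case $a=1$ being trivial). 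Indeed, granting $(\star)$, put $m_1=\lfloor np\rfloor$ and $m_2=\lceil np\rceil$ (so $m_1=m_2$ if $np\in\mathbb{Z}$). Using $(\star)$ together with reflection one checks that the lower condition $p\ge M(m_1,n-m_1+1)$ holds automatically (as $p\ge m_1/n\ge M(m_1,n-m_1+1)$) and that the upper condition $p\le M(m_2+1,n-m_2)$ holds automatically (as $p\le m_2/n\le M(m_2+1,n-m_2)$); since $M(m_1+1,n-m_1)=M(m_2,n-m_2+1)$, the position of $p$ relative to this one common value decides whether $m_1$ or $m_2$ satisfies both of its required inequalities, and one of them always does. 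The boundary sub-cases $m_1=0$ and $m_2=n$ are handled by the same two-line computation applied to $\mathrm{Beta}(1,n)$ and $\mathrm{Beta}(n,1)$.

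\textbf{Step 2: reduction of $(\star)$, and the main obstacle.} Apply Fact~\ref{fact:beta-binomial} once more, this time at the rational point $y=\frac{a-1}{a+b-1}$, chosen so that $Ny$ is an integer for $N:=a+b-1$: it gives $F^{beta}_{a,b}(y)=1-F^B_{N,y}(a-1)$, hence $(\star)$ --- i.e.\ $F^{beta}_{a,b}(y)\le\tfrac12$ --- is equivalent to $\Pr(Z\le a-1)\ge\tfrac12$ for $Z\sim\mathrm{Binomial}(N,\tfrac{a-1}{N})$, a binomial whose mean $a-1$ is an integer. Thus the whole statement reduces to its own special case: \emph{a binomial with integer mean $\mu$ has $\mu$ as a median}, i.e.\ $\Pr(Z\le\mu)\ge\tfrac12$ and $\Pr(Z\ge\mu)\ge\tfrac12$ for $Z\sim\mathrm{Binomial}(N,\mu/N)$ (the two inequalities are interchanged under $q\leftrightarrow1-q$, so only one needs to be shown). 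This integer-mean case is the crux and, I expect, the only genuinely delicate step; it is precisely the nontrivial kernel established by Jogdeo and Samanta~\cite{JogdeoS} --- using, among other things, that $\mu$ is then also the mode of the distribution together with a careful estimate of the binomial tail sums --- and I would invoke their result for it rather than re-prove it. With that in hand, everything in Step~1 is routine bookkeeping resting only on Fact~\ref{fact:beta-binomial}.
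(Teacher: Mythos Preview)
The paper does not supply a proof of this fact at all; it is stated as a citation to \cite{JogdeoS} and used as a black box. So there is no ``paper's proof'' to compare against.

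Your approach is correct and goes further than the paper: using the Beta--Binomial identity (Fact~\ref{fact:beta-binomial}) twice, you reduce the general statement to its own integer-mean special case --- that $\Pr(Z\le\mu)\ge\tfrac12$ when $Z\sim\mathrm{Binomial}(N,\mu/N)$ and $\mu\in\mathbb{Z}$ --- and the reduction is clean. Step~1 is sound (the dichotomy on the common threshold $M(m_1+1,n-m_1)=M(m_2,n-m_2+1)$ when $m_2=m_1+1$ is exactly right, and the boundary cases $m_1=0$, $m_2=n$ do go through by the same mechanism since the ``missing'' inequality in each case is vacuous). Step~2 is also correct: evaluating Fact~\ref{fact:beta-binomial} at $y=(a-1)/(a+b-1)$ lands you on a binomial with integer mean, as you say. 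The only caveat is that you still cite \cite{JogdeoS} for that kernel, so the argument remains not fully self-contained; but you have correctly isolated the integer-mean case as the sole nontrivial ingredient, with everything else being routine bookkeeping on top of Fact~\ref{fact:beta-binomial}. That is more than the paper attempts.
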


\begin{fact}[(Chernoff--Hoeffding bounds)]
\label{lem:Chernoff}
Let $X_1, . . . , X_n$ be random variables with common range $[0, 1]$ and such that $\ExR{X_t \condR X_1, . . . , X_{t-1}} = \mu$. Let $S_n = X_1 + \ldots + X_n$. Then for
all $a\ge 0$,
$$\Pr(S_n \ge n\mu + a) \le e^{-2a^2/n},$$
$$\Pr(S_n \le n\mu - a) \le e^{-2a^2/n}.$$
\end{fact}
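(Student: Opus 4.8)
The plan is to prove both tail bounds by the exponential moment method (Chernoff's bounding technique), exploiting the martingale-difference structure implied by the hypothesis $\ExR{X_t \condR X_1,\ldots,X_{t-1}}=\mu$. First I would fix $a\ge 0$ and a free parameter $s>0$, and apply Markov's inequality to the nonnegative random variable $e^{s(S_n-n\mu)}$:
$$\Pr(S_n \ge n\mu + a) = \Pr\!\big(e^{s(S_n-n\mu)} \ge e^{sa}\big) \le e^{-sa}\,\ExR{e^{s(S_n-n\mu)}}.$$
The whole problem then reduces to controlling the moment generating function $\ExR{e^{s(S_n-n\mu)}}$, after which I optimize over $s$.

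To bound the MGF I would peel off one summand at a time. Writing $S_n-n\mu=\sum_{t=1}^n (X_t-\mu)$ and conditioning on the first $n-1$ variables,
$$\ExR{e^{s(S_n-n\mu)}} = \ExR{\, e^{s\sum_{t=1}^{n-1}(X_t-\mu)}\,\ExB{e^{s(X_n-\mu)}\condB X_1,\ldots,X_{n-1}}\,}.$$
The inner conditional expectation is handled by Hoeffding's lemma: the variable $Z=X_n-\mu$ satisfies $\ExB{Z\condB X_1,\ldots,X_{n-1}}=0$ by hypothesis and takes values in $[-\mu,\,1-\mu]$, an interval of length exactly $1$, so $\ExB{e^{sZ}\condB X_1,\ldots,X_{n-1}} \le e^{s^2/8}$. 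Substituting this and iterating the conditioning over $t=n,n-1,\ldots,1$ yields $\ExR{e^{s(S_n-n\mu)}} \le e^{ns^2/8}$. Combining with the Markov step gives $\Pr(S_n \ge n\mu+a) \le e^{-sa+ns^2/8}$, and minimizing the exponent over $s>0$ (the optimum is $s=4a/n$) produces exactly $e^{-2a^2/n}$. The lower-tail bound follows by applying the identical argument to the variables $-X_t$, whose conditional means are $-\mu$ and whose ranges again have length $1$.

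The main obstacle is the supporting Hoeffding lemma — the estimate $\ExR{e^{sZ}}\le e^{s^2/8}$ for a mean-zero $Z$ ranging over an interval of length $1$ — since everything else is routine once it is in hand. I would establish it by convexity: for $z\in[c,d]$ the convexity of $z\mapsto e^{sz}$ gives the chord bound $e^{sz}\le \frac{d-z}{d-c}e^{sc}+\frac{z-c}{d-c}e^{sd}$; taking (conditional) expectations and using $\ExR{Z}=0$ collapses the right-hand side to $e^{\psi(s)}$ for a function $\psi$ of $s$ alone satisfying $\psi(0)=\psi'(0)=0$ and $\psi''(s)\le (d-c)^2/4$ uniformly. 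A second-order Taylor expansion then gives $\psi(s)\le s^2(d-c)^2/8$, which with $d-c=1$ is the claimed bound. One subtlety worth flagging is that the hypothesis supplies only a conditional mean, not independence, so the iteration must proceed through nested conditional expectations rather than a product of unconditional MGFs; the fixed-conditional-mean assumption is precisely what makes each conditional MGF bound uniform and lets the telescoping go through.
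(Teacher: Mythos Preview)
Your proof is correct and follows the standard exponential-moment (Azuma--Hoeffding) argument: Markov's inequality on $e^{s(S_n-n\mu)}$, peeling via conditional expectations using the martingale-difference structure, Hoeffding's lemma to bound each conditional MGF by $e^{s^2/8}$, and optimization over $s$. You were also right to flag that independence is not assumed and that the nested conditioning is what makes the telescoping valid.

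There is nothing to compare against, however: the paper states this as a \emph{Fact} and gives no proof of its own --- it is simply cited as the classical Chernoff--Hoeffding inequality and used as a black box in Lemma~\ref{lem:Binomial} and elsewhere. Your write-up supplies exactly the standard derivation one would find in a textbook treatment.
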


\begin{lemma}
\label{lem:Binomial}
For all $n, p\in[0,1], \delta\ge 0$,
\begin{equation}
F^B_{n,p}(np-n\delta) \le e^{-2n\delta^2}, \ \ 1-F^B_{n,p}(np+ n \delta) \le e^{-2n\delta^2},
\end{equation}
\begin{equation}
1-F^B_{n+1, p}(np + n \delta ) \le \frac{e^{4\delta}}{e^{2n\delta^2}}.
\end{equation}
\end{lemma}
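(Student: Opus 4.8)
The plan is to obtain all three inequalities from the Chernoff--Hoeffding bounds of Fact~\ref{lem:Chernoff}, applied to $S_n = X_1 + \cdots + X_n$ where the $X_i$ are i.i.d.\ Bernoulli$(p)$, so that $S_n \sim \mathrm{Binomial}(n,p)$ and $F^B_{n,p}(x) = \Pr(S_n \le x)$. The first inequality is immediate: taking $a = n\delta$ in the lower-tail bound, $F^B_{n,p}(np - n\delta) = \Pr(S_n \le np - n\delta) \le e^{-2(n\delta)^2/n} = e^{-2n\delta^2}$. The second is equally immediate from the upper-tail bound, since $1 - F^B_{n,p}(np + n\delta) = \Pr(S_n > np + n\delta) \le \Pr(S_n \ge np + n\delta) \le e^{-2(n\delta)^2/n} = e^{-2n\delta^2}$. (When $\delta$ is large enough that $np \pm n\delta$ leaves $[0,n]$, the probability in question is $0$ and there is nothing to prove.)

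For the third inequality the idea is to reduce a $\mathrm{Binomial}(n+1,p)$ upper tail to a $\mathrm{Binomial}(n,p)$ upper tail, which we have just bounded. If $\delta \le 2/n$ then $4\delta - 2n\delta^2 = 2\delta(2 - n\delta) \ge 0$, so $e^{4\delta}/e^{2n\delta^2} \ge 1$ and the bound is trivial; assume henceforth $\delta > 2/n$. Write a $\mathrm{Binomial}(n+1,p)$ variable as $X = Y + Z$ with $Y \sim \mathrm{Binomial}(n,p)$ and $Z \sim \mathrm{Bernoulli}(p)$ independent, and condition on $Z$:
\[
1 - F^B_{n+1,p}(np + n\delta) = \Pr(X > np + n\delta) = (1-p)\,\Pr(Y > np + n\delta) + p\,\Pr(Y > np + n\delta - 1).
\]
The first probability is at most $e^{-2n\delta^2}$ by the second inequality. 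For the second, put $\delta' = \delta - \tfrac1n > \tfrac1n > 0$, so that $np + n\delta - 1 = np + n\delta'$, and the second inequality (with $\delta'$ in place of $\delta$) gives $\Pr(Y > np + n\delta') \le e^{-2n(\delta')^2} = e^{-2n\delta^2 + 4\delta - 2/n} \le e^{-2n\delta^2 + 4\delta}$. Since $e^{-2n\delta^2} \le e^{-2n\delta^2 + 4\delta}$, combining the two terms yields $1 - F^B_{n+1,p}(np+n\delta) \le e^{-2n\delta^2 + 4\delta} = e^{4\delta}/e^{2n\delta^2}$, as claimed.

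There is essentially no obstacle here: the whole lemma is bookkeeping around the Chernoff--Hoeffding bound. The only point needing a little care is the ``$-1$'' incurred when the extra Bernoulli trial in the decomposition succeeds --- this is precisely what produces the $e^{4\delta}$ slack --- and the reason we peel off the range $\delta \le 2/n$ first is to keep $\delta' = \delta - 1/n$ positive so that the tail bound applies cleanly. (Alternatively one can apply the upper-tail Chernoff--Hoeffding bound directly to $\mathrm{Binomial}(n+1,p)$ with $a = n\delta - p$ and then verify the elementary inequality $2(n\delta - p)^2/(n+1) \ge 2n\delta^2 - 4\delta$ by expanding and using $p \le 1$; the decomposition argument is cleaner and sidesteps the resulting case analysis on $p$ and $\delta$.)
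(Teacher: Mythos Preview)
Your proof is correct and follows essentially the same route as the paper: the first two bounds are immediate from Fact~\ref{lem:Chernoff}, and for the third you use the same decomposition $F^B_{n+1,p}(r) = (1-p)F^B_{n,p}(r) + pF^B_{n,p}(r-1)$ and then apply the Chernoff--Hoeffding bound with deviation $n\delta-1$, picking up the $e^{4\delta}$ factor from expanding $(n\delta-1)^2$. The only difference is cosmetic: the paper bounds the convex combination below by $F^B_{n,p}(r-1)$ in one step, whereas you bound the two tail terms separately and then take the larger; and you explicitly handle the regime $\delta \le 2/n$ (where the claimed bound exceeds $1$ and is vacuous), a point the paper leaves implicit.
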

\begin{proof}
The first result is a simple application of Chernoff--Hoeffding bounds from Fact~\ref{lem:Chernoff}.
For the second result, we observe that,
$$F^B_{n+1, p}(np + n \delta) = (1-p) F^B_{n,p}(np+n \delta) + p F^B_{n,p}(np + n\delta-1) \ge  F^B_{n,p}(np + n \delta-1).$$
By Chernoff--Hoeffding bounds,
$$1-F^B_{n,p}(np +\delta n-1) \le e^{-2(\delta n-1)^2/n} = e^{-2(n^2 \delta^2 + 1 -2\delta n)/n} \le e^{-2n\delta^2 + 4\delta} = \frac{e^{4\delta}}{e^{2n\delta^2}}.$$
\end{proof}

\section{Proofs of Lemmas}
\subsection{Proof of Lemma \ref{lem:E2}}
\label{app:lem:E2}
\begin{proof}
\add{
In this lemma, we lower bound the probability of $E_2(t)$ by $1-\frac{2}{T^2}$. Recall that event $E_2(t)$ holds if the following is true:
$$\{\theta_2(t) \le \mu_2 + \frac{\Delta}{2}\} \mbox{ or } \{k_2(t) < L\}.$$ 
Also define $A(t)$ as the event 
$$A(t): \frac{S_2(t)}{k_2(t)} \le \mu_2 + \frac{\Delta}{4},$$
where $S_2(t), k_2(t)$ denote the number of successes and number of plays respectively of the second arm until time $t-1$.
We will upper bound the probability of $\Pr(\overline{E_2(t)}) = 1-\Pr(E_2(t))$ as:
\begin{eqnarray}
\label{eq:E2}
\Pr(\overline{E_2(t)}) & = & \Pr(\theta_2(t) \ge \mu_2 + \frac{\Delta}{2}, k_2(t) \ge L) \nonumber \\
& \le & \Pr(\overline{A(t)}, k_2(t) \ge L) + \Pr(\theta_2(t) \ge \mu_2+\frac{\Delta}{2}, k_2(t) \ge L, A(t)).
\end{eqnarray}
For clarity of exposition, let us define another random variable $\overline{Z}_{2,M}$, as the average number of successes over the first $M$ plays of the second arm. More precisely, let random variable $Z_{2,m}$ denote the output of the $m^{th}$ play of the second arm.  Then,
$$\overline{Z}_{2,M} = \frac{1}{M} \sum_{m=1}^{M} Z_{2,m}.$$
Note that by definition, $\overline{Z}_{2,k_2(t)} = \frac{S_2(t)}{k_2(t)}$. Also, $\overline{Z}_{2,M}$ is the average of $M$ iid Bernoulli variables, each with mean $\mu_2$. \newline
Now, for all $t$,
$$
\begin{array}{rcl}
\Pr(\overline{A(t)}, k_2(t) \ge L) & = & \sum_{\ell =L}^{T} \Pr(\overline{Z}_{2,k_2(t)} \ge  \mu_2 + \frac{\Delta}{4}, k_2(t) = \ell) \\
& = & \sum_{\ell=L}^T\Pr(\overline{Z}_{2,\ell} \ge \mu_2 + \frac{\Delta}{4}, k_2(t) = \ell) \\
& \le & \sum_{\ell=L}^T\Pr(\overline{Z}_{2,\ell} \ge \mu_2 + \frac{\Delta}{4}) \\
& \le & \sum_{\ell=L}^T e^{-2\ell\Delta^2/16} \\
& \le & \frac{1}{T^2}.
\end{array}
$$
The second last inequality is by applying Chernoff bounds, since $\overline{Z}_{2,\ell}$ is simply the average of $\ell$ iid Bernoulli variables each with mean $\mu_2$. \newline\\
We will derive the bound on second probability term in \eqref{eq:E2} in a similar manner. It will be useful to define $W(\ell, z)$ as a random variable distributed as $\mbox{Beta}(\ell z+1, \ell-\ell z+1)$. Note that if at time $t$, the number of plays of second arm is $k_2(t)=\ell$, then $\theta_2(t)$ is distributed as $\mbox{Beta}(\ell \overline{Z}_{2,\ell}+1, \ell-\ell \overline{Z}_{2,\ell}+1)$, i.e. same as $W(\ell, \overline{Z}_{2,\ell})$.
\begin{eqnarray*}
\Pr(\theta_2(t) > \mu_2 + \frac{\Delta}{2}, A(t), k_2(t) \ge L)& = & \sum_{\ell=L}^T \Pr(\theta_2(t) > \mu_2 + \frac{\Delta}{2}, A(t), k_2(t) = \ell)\\
& \le & \sum_{\ell=L}^T \Pr(\theta_2(t) > \frac{S_2(t)}{k_2(t)} - \frac{\Delta}{4} + \frac{\Delta}{2}, k_2(t) =\ell)  \\
& = & \sum_{\ell=L}^T \Pr(W(\ell, \overline{Z}_{2,\ell}) > \overline{Z}_{2,\ell} + \frac{\Delta}{4}, k_2(t) =\ell) \\
& \le & \sum_{\ell=L}^T \Pr(W(\ell, \overline{Z}_{2,\ell}) > \overline{Z}_{2,\ell} + \frac{\Delta}{4}) \\
\mbox{(using Fact 1) } & = & \sum_{\ell=L}^T \ExR{F^B_{\ell+1, \overline{Z}_{2,\ell} +\frac{\Delta}{4}}(\ell\overline{Z}_{2,\ell})}  \\
& \le & \sum_{\ell=L}^T \ExR{F^B_{\ell, \overline{Z}_{2,\ell}+\frac{\Delta}{4}}(\ell\overline{Z}_{2,\ell})}\\
& \le & \sum_{\ell=L}^T \exp\{-\frac{2\Delta^2\ell^2/16}{\ell}\} \\
& \le & T e^{-2L\Delta^2/16} = \frac{1}{T^2}.
\end{eqnarray*}
The third-last inequality follows from the observation that 
$$F^B_{n+1, p}(r) = (1-p)F^B_{n,p}(r) + p F^B_{n,p}(r-1) \le (1-p)F^B_{n,p}(r) + p F^B_{n,p}(r) = F^B_{n,p}(r). $$
And, the second-last inequality follows from Chernoff--Hoeffding bounds 
(refer to Fact \ref{lem:Chernoff} and Lemma \ref{lem:Binomial}).
}
\remove{Summing above over $t=1, \ldots, T$, we get the result of the lemma.}
\end{proof}

\subsection{Proof of Lemma~\ref{lem:exj}}
\label{app:lem:exj}
\begin{proof}
Using Lemma \ref{lem:exj-1}, the expected value of $X(j, s(j), y)$ for any given $s(j)$,
$$\ExR{X(j,s(j),y) \condR s(j)} = \frac{1}{F^B_{j+1,y}(s(j))} - 1.$$ 

\paragraph{Case of large $j$:}
First, we consider the case of large $j$, i.e. when $j \ge 4(\ln T)/\Delta'^2$. Then, by simple application of Chernoff--Hoeffding bounds (refer to Fact \ref{lem:Chernoff} and Lemma \ref{lem:Binomial}), we can derive that for any $s \ge (y + \frac{\Delta'}{2})j$,
$$F^B_{j+1, y}(s) \ge F^B_{j+1, y}(yj + \frac{\Delta' j}{2}) \ge 1-\frac{e^{4\Delta'/2}}{e^{2j\Delta'^2/4}} \ge 1-\frac{e^{2\Delta'}}{T^2} \ge 1-\frac{8}{T^2},$$
giving that for $s \ge y(j + \frac{\Delta'}{2})$, $\ExR{X(j+1, s, y)} \le \frac{1}{(1-\frac{8}{T^2})}-1$.

Again using Chernoff--Hoeffding bounds, the probability that $s(j)$ takes values smaller than $(y + \frac{\Delta'}{2})j$ 
can be bounded as,
$$F^B_{j,\mu_1}(yj + \frac{\Delta' j}{2}) = F^B_{j,\mu_1}(\mu_1 j - \frac{\Delta' j}{2}) \le e^{-2j\frac{\Delta'^2}{4}} \le \frac{1}{T^2} <\frac{8}{T^2}.$$
For these values of $s(j)$, we will use the upper bound of $T$. Thus,
\begin{eqnarray*} 
\ExR{\min\{\ExB{X(j,s(j),y) \condB s(j)}, T\}}  & \le & (1-8/T^2) \cdot \left(\frac{1}{(1-8/T^2)} - 1\right)+ \frac{8}{T^2} \cdot T \le \frac{16}{T}.
\end{eqnarray*}


\paragraph{Case of small $j$:}
For small $j$, the argument is more delicate. We use,
\begin{equation} 
\label{eq:smallj-eq1-tight}
\ExR{\ExB{X(j,s(j), y)\condB s(j)}} = \ExR{\frac{1}{F^B_{j+1,y}(s(j))}-1} = \sum_{s=0}^j \frac{f^B_{j,\mu_1}(s)}{F^B_{j+1,y}(s)} -1,
 \end{equation}
 where $f^B_{j,\mu_1}$ denotes pdf of the $\mbox{Binomial}(j,\mu_1)$ distribution.
We use the observation that for $s\ge \lceil y(j+1) \rceil$, $F^B_{j+1,y}(s) \ge 1/2$. This is because the median of a $\mbox{Binomial}(n,p)$ distribution is either $\lfloor np \rfloor$ or $\lceil np \rceil$ (see \cite{JogdeoS}). Therefore,
\begin{equation} 
\label{eq:larges-bound}
\sum_{s=\lceil y(j+1) \rceil}^j \frac{f^B_{j,\mu_1}(s)}{F^B_{j+1,y}(s)} \le 2.
\end{equation}
For small $s$, i.e., $s \le \lfloor yj \rfloor$, we use $F^B_{j+1, y}(s) = (1-y)F^B_{j,y}(s) + y F_{j,y}(s-1) \ge (1-y)F^B_{j,y}(s)$ and  $F^B_{j,y}(s) \ge f^B_{j,y}(s)$, to get
\begin{eqnarray}
\label{eq:smalls-bound}
\sum_{s=0}^{\lfloor yj \rfloor} \frac{f^B_{j,\mu_1}(s)}{F^B_{j+1,y}(s)} 
 & \le & \sum_{s=0}^{\lfloor yj \rfloor} \frac{1}{(1-y)} \frac{f^B_{j,\mu_1}(s)}{f^B_{j,y}(s)}\nonumber\\
 & = &  \sum_{s=0}^{\lfloor yj \rfloor} \frac{1}{(1-y)} \frac{\mu_1^s(1-\mu_1)^{j-s}}{y^s (1-y)^{j-s}}\nonumber\\
 & = &  \sum_{s=0}^{\lfloor yj \rfloor} \frac{1}{(1-y)} R^s \frac{(1-\mu_1)^{j}}{(1-y)^{j}}\nonumber\\
 & = &  \frac{1}{(1-y)} \left(\frac{R^{\lfloor yj \rfloor+1}-1}{R-1}\right) \frac{(1-\mu_1)^{j}}{(1-y)^{j}}\nonumber\\
 & \le & \frac{1}{(1-y)}\frac{R}{R-1} \frac{\mu_1^{yj}(1-\mu_1)^{(j-yj)}}{y^{yj}(1-y)^{j-yj}}\nonumber\\
 & = & \frac{\mu_1}{\mu_1-y} e^{-Dj} = \frac{\mu_1}{\Delta'} e^{-Dj}.
 \end{eqnarray}

If $\lfloor yj \rfloor < \lceil yj \rceil < \lceil y(j+1) \rceil$, then we need to additionally consider $s=\lceil yj \rceil$. Note, however, that in this case $\lceil yj \rceil \le yj + y$. For $s=\lceil yj \rceil$, 
\begin{eqnarray}
\label{eq:specials-bound1}
\frac{f^B_{j,\mu_1}(s)}{F^B_{j+1,y}(s)} 
& \le & \frac{1}{(1-y)F^B_{j,y}(s)} \nonumber\\
& \le & \frac{2}{1-y}. 
\end{eqnarray}
Alternatively, we can use the following bound for $s=\lceil yj \rceil$,
\begin{eqnarray}
\label{eq:specials-bound2}
\frac{f^B_{j,\mu_1}(s)}{F^B_{j+1,y}(s)} & \le  & \frac{1}{(1-y)} \frac{f^B_{j,\mu_1}(s)}{F^B_{j,y}(s)} \nonumber\\
& \le & \frac{1}{(1-y)} \frac{f^B_{j,\mu_1}(s)}{f^B_{j,y}(s)} \nonumber\\
& \le & \frac{1}{(1-y)} R^s \left(\frac{1-\mu_1}{1-y}\right)^j \nonumber\\
& \le & \frac{1}{(1-y)} R^{yj+y} \left(\frac{1-\mu_1}{1-y}\right)^j \:\:\:\:\:\;\; (\text{because } s=\lceil yj\rceil \le yj + y) \nonumber\\
& \le & \frac{R^y}{(1-y)} e^{-Dj}.
\end{eqnarray}

Next, we substitute the bounds from \eqref{eq:larges-bound}-\eqref{eq:specials-bound2} in Equation \eqref{eq:smallj-eq1-tight} to get the result in the lemma. In this substitution, for $s=\lceil yj \rceil$, we use the bound in Equation \eqref{eq:specials-bound1} when $j < \frac{y}{D} \ln R$, and the bound in Equation \eqref{eq:specials-bound2} when $j \ge \frac{y}{D} \ln R$.
\end{proof}

\subsection{Details of Equation \eqref{eq:k2T}}
\label{app:eq:k2T}
Using Lemma \ref{lem:exj} for $y=\mu_2+\Delta/2$, and $\Delta'=\Delta/2$, we can bound the expected number of plays of the second arm as:
\begin{eqnarray*}
\ExR{k_2(T)} & = & L  + \ExR{\sum_{j=j_0}^{T-1} Y_{j}}\\
& \le & L + \sum_{j=0}^{T-1} \ExR{\min\{ \ExB{X(j,s(j), \mu_2+\frac{\Delta}{2}) \condB s(j)},\ \ T\}\ }+ \sum_t \Pr(\overline{E_2(t)})\cdot T\\
& \le & L + \frac{4\ln T}{\Delta'^2} + \sum_{j=0}^{4(\ln T)/\Delta'^2-1} \frac{\mu_1}{\Delta'} e^{-Dj} + \left(\frac{y}{D}\ln R\right) \frac{2}{1-y} +  \sum_{j=\frac{y}{D}\ln{R}}^{4(\ln T)/\Delta'^2-1} \frac{R^y e^{-Dj}}{1-y} + \frac{16}{T} \cdot T + 2\\
& = &  L + \frac{4\ln T}{\Delta'^2} + \sum_{j=0}^{4(\ln T)/\Delta'^2-1} \frac{\mu_1}{\Delta'} e^{-Dj} + \frac{y}{D}\ln{R} \cdot \frac{2}{(1-y)} + 
\sum_{j=0}^{4\ln T /\Delta'^2-\frac{y}{D}\ln{R}-1} \frac{1}{1-y} e^{-Dj} + 18\\
& \le &  L + \frac{4\ln T}{\Delta'^2} + \frac{y}{D}\ln{R} \cdot \frac{2}{\Delta'} + \sum_{j=0}^{T-1} \frac{(\mu_1 +1)}{\Delta'} e^{-Dj} + 18\\
 & \stackrel{(*)}{\le} & L + \frac{4\ln T}{\Delta'^2} + \frac{D+1}{\Delta' D} \cdot \frac{2}{\Delta'} +  \frac{2}{\Delta'} \frac{2}{(\min\{D,1\})} + 18\\
& \stackrel{(**)}{\le} & L + \frac{4\ln T}{\Delta'^2} + \frac{2}{\Delta'^2} + \frac{1}{\Delta'^4} + \frac{4}{\Delta'^3} +18\\
& = & L + \frac{16\ln T}{\Delta^2} + \frac{8}{\Delta^2} + \frac{16}{\Delta^4} + \frac{32}{\Delta^3} + 18\\
& \le &  \remove{\frac{32\ln T}{\Delta^2}} \add{\frac{40\ln T}{\Delta^2}} + \frac{48}{\Delta^4} + 18. 
\end{eqnarray*}
The step marked $(*)$ is obtained using following derivations.
$$ y\ln R = y\ln \frac{\mu_1(1-y)}{y(1-\mu_1)} = y \ln \frac{\mu_1}{y} + y \ln \frac{(1-y)}{(1-\mu_1)} \le \mu_1 + \frac{y}{1-y} (D - y\ln \frac{y}{\mu_1} )  \le 1+\frac{y}{1-y}(D + \mu_1) \le  \frac{D+1}{\Delta'}.$$
And, since $D\ge 0$ (Gibbs' inequality),
$$ \sum_{j\ge 0} e^{-Dj} = \frac{1}{1-e^{-D}} \le \max\{\frac{2}{D}, \frac{e}{e-1}\} \le \frac{2}{\min\{D,1\}}.$$
And, $(**)$ uses Pinsker's inequality to obtain $D\geq 2{\Delta'}^2$.\newline\\

\subsection{Proof of Lemma~\ref{lem:multiple-E(T)}}
\label{app:lem:multiple-E(T)}
\begin{proof}
The proof of this lemma follows on the similar lines as the proof of Lemma \ref{lem:E2} in Appendix \ref{app:lem:E2} for the two arms case. We will prove the second statement, the first statement will follow as a corollary.

To prove the second statement of this lemma, we are required to lower bound the probability of $\Pr(E(t) | s(j)=s)$ for all $t, j, s\le j$, by $1-\frac{4(N-1)}{T^2}$, where $s(j)$ denotes the number of successes in first $j$ plays of the first arm.
Recall that event $E(t)$ holds if the following is true: 
$$\{\forall i\in C(t), \theta_i(t) \in [\mu_i - \frac{\Delta_i}{2}, \mu_i + \frac{\Delta_i}{2}]\}$$
Let us define $E_i^+(t)$ as the event $\{\theta_i(t) \le \mu_i + \frac{\Delta_i}{2} \mbox{ or } i\notin C(t)\}$, and $E_i^-(t)$ as the event $\{\theta_i(t) \ge \mu_i - \frac{\Delta_i}{2} \mbox{ or } i\notin C(t)\}$. Then, we can bound $\Pr(\overline{E(t)} | s(j))$ as
$$\Pr(\overline{E(t)}| s(j)) \le \sum_{i=2}^N \Pr(\overline{E_i^+(t)} | s(j)) + \Pr(\overline{E_i^-(t)} | s(j)).$$
Now, observe that 
$$\Pr(\overline{E_i^+(t)} | s(j)) = 
\Pr(\theta_i(t) > \mu_i + \frac{\Delta_i}{2}, k_i(t) \ge L_i | s(j)),$$
where $k_i(t)$ is the number of plays of arm $i$ until time $t-1$.

As in the case of two arms, define $A_i(t)$ as the event 
$$A_i(t): \frac{S_i(t)}{k_i(t)} \le \mu_i + \frac{\Delta}{4},$$
where $S_i(t), k_i(t)$ denote the number of successes and number of plays respectively of the $i^{th}$ arm until time $t-1$.

We will upper bound the probability of $\Pr(\overline{E_i^+(t)} | s(j))$ for all $t, j, i \ne 1, $ using,
\begin{eqnarray}
\label{eq:Ei}
\Pr(\overline{E_i^+(t)} | s(j) ) & = & \Pr(\theta_i(t) > \mu_i + \frac{\Delta_i}{2}, k_i(t) \ge L_i | s(j)) \nonumber \\
& \le & \Pr(\overline{A_i(t)}, k_i(t) \ge L_i | s(j) ) + \Pr(\theta_i(t) > \mu_i+\frac{\Delta_i}{2}, k_i(t) \ge L_i, A_i(t) | s(j))\nonumber\\
\end{eqnarray}

For clarity of exposition, similar to the two arms case, for every $i=1,\ldots, N$ we define variables $\{Z_{i,m}\}$, and $\overline{Z}_{i,M}$. $Z_{i,m}$ denote the output of the $m^{th}$ play of the $i^{th}$ arm. And,
$$\overline{Z}_{i,M} = \frac{1}{M} \sum_{m=1}^{M} Z_{i,m}$$
Note that for all $i,m$, $Z_{i,m}$ is Bernoulli variable with mean $\mu_i$, and all $Z_{i,m}, i=1,\ldots, N, m=1, \ldots, T$ are independent of each other.

Now, instead of bounding the first term $\Pr(\overline{A_i(t)}, k_i(t) \ge L_i | s(j) ) $, we prove a bound on $\Pr(\overline{A(t)}, k_2(t) \ge L | Z_{1,1}, \ldots, Z_{1,j})$. Note that the latter bound is stronger, since $s(j)$ is simply $\sum_{m=1}^j Z_{1,m}$.

Now, for all $t$, $i\ne 1$,
$$
\begin{array}{rcl}
\Pr(\overline{A_i(t)}, k_i(t) \ge L_i | Z_{1,1}, \ldots, Z_{1,j}) & =  & \sum_{\ell =L}^{T} \Pr(\overline{Z}_{i,k_i(t)} >  \mu_i + \frac{\Delta_i}{4}, k_i(t) = \ell | Z_{1,1}, \ldots, Z_{1,j}) \\
& = & \sum_{\ell=L}^T\Pr(\overline{Z}_{i,\ell} > \mu_i + \frac{\Delta_i}{4}, k_i(t) = \ell | Z_{1,1}, \ldots, Z_{1,j}) \\
& \le & \sum_{\ell=L}^T\Pr(\overline{Z}_{i,\ell} > \mu_i + \frac{\Delta_i}{4} | Z_{1,1}, \ldots, Z_{1,j}) \\
& = & \sum_{\ell=L}^T\Pr(\overline{Z}_{i,\ell} > \mu_i + \frac{\Delta_i}{4}) \\
& \le & \sum_{\ell=L}^T e^{-2\ell\Delta_i^2/16} \\
& \le & \frac{1}{T^2}
\end{array}
$$
The third last equality holds because for all $i, i', m, m'$, $Z_{i,m}$ and  $Z_{i',m'}$ are independent of each other, which means $\overline Z_{i,\ell}$ is independent of $Z_{1,m}$ for all $m=1,\ldots, j$.
The second last inequality is by applying Chernoff bounds, since $\overline{Z}_{i,\ell}$ is simply the average of $\ell$ iid Bernoulli variables each with mean $\mu_2$. \newline\\

We will derive the bound on second probability term in \eqref{eq:Ei} in a similar manner. As before, it will be useful to define $W(\ell, z)$ as a random variable distributed as $\mbox{Beta}(\ell z+1, \ell-\ell z+1)$. Note that if at time $t$, the number of plays of arm $i$ is $k_i(t)=\ell$, then $\theta_i(t)$ is distributed as $\mbox{Beta}(\ell \overline{Z}_{i,\ell}+1, \ell-\ell \overline{Z}_{i,\ell}+1)$, i.e. same as $W(\ell, \overline{Z}_{i,\ell})$. Now, for the second probability term in \eqref{eq:Ei}, 
\begin{eqnarray*}
& & \hspace{-0.5in}\Pr(\theta_i(t) > \mu_i + \frac{\Delta}{2}, A_i(t), k_i(t) \ge L_i | Z_{1,1}, \ldots, Z_{1,j} )\\
& = & \sum_{\ell=L_i}^T \Pr(\theta_i(t) > \mu_i + \frac{\Delta_i}{2}, A_i(t), k_i(t) = \ell | Z_{1,1}, \ldots, Z_{1,j})\\
& \le & \sum_{\ell=L_i}^T \Pr(\theta_i(t) > \frac{S_i(t)}{k_i(t)} - \frac{\Delta_i}{4} + \frac{\Delta_i}{2}, k_i(t) =\ell | Z_{1,1}, \ldots, Z_{1,j})  \\
& = & \sum_{\ell=L_i}^T \Pr(W(\ell, \overline{Z}_{i,\ell}) > \overline{Z}_{i,\ell} + \frac{\Delta_i}{4}, k_i(t) =\ell | Z_{1,1}, \ldots, Z_{1,j}) \\
& \le & \sum_{\ell=L_i}^T \Pr(W(\ell, \overline{Z}_{i,\ell}) > \overline{Z}_{i,\ell} + \frac{\Delta_i}{4} | Z_{1,1}, \ldots, Z_{1,j}) \\
& = & \sum_{\ell=L_i}^T \Pr(W(\ell, \overline{Z}_{i,\ell}) > \overline{Z}_{i,\ell} + \frac{\Delta_i}{4}) \\
\mbox{(using Fact 1) } & = & \sum_{\ell=L_i}^T \ExR{F^B_{\ell+1, \overline{Z}_{i,\ell} +\frac{\Delta_i}{4}}(\ell\overline{Z}_{i,\ell})}  \\
& \le & \sum_{\ell=L_i}^T \ExR{F^B_{\ell, \overline{Z}_{i,\ell}+\frac{\Delta_i}{4}}(\ell\overline{Z}_{i,\ell})}\\
& \le & \sum_{\ell=L_i}^T \exp\{-\frac{2\Delta_i^2\ell^2/16}{\ell}\} \\
& \le & T e^{-2L_i\Delta_i^2/16} = \frac{1}{T^2}.
\end{eqnarray*}
Here, we used the observation that for all $i, i', m, m'$, $Z_{i,m}$ and $Z_{i',m'}$ are independent of each other, which means $\overline Z_{i,\ell}$ and $W(\ell, \overline{Z}_{i,\ell})$ are independent of $Z_{1,m}$ for all $m=1,\ldots, j$.
The third-last inequality follows from the observation that 
$$F^B_{n+1, p}(r) = (1-p)F^B_{n,p}(r) + p F^B_{n,p}(r-1) \le (1-p)F^B_{n,p}(r) + p F^B_{n,p}(r) = F^B_{n,p}(r). $$
And, the second-last inequality follows from Chernoff--Hoeffding bounds 
(refer to Fact \ref{lem:Chernoff} and Lemma \ref{lem:Binomial}).
Substituting above in Equation \eqref{eq:Ei}, we get
$$\Pr(\overline{E_i^+(t)} | s(j)) \le \frac{2}{T^2}$$
Similarly, we can obtain
 $$\Pr(\overline{E_i^-(t)} | s(j)) \le \frac{2}{T^2}$$
 
Summing over $i=2,\ldots, N$, we get
$$\Pr(\overline{E(t)} | s(j)) \le \frac{4(N-1)}{T^2}$$
which implies the second statement of the lemma. The first statement is a simple corollary of this.

\end{proof}

\subsection{Proof of Lemma \ref{lem:GammaV-dep}}
\label{app:GammaV-dep}
\begin{proof}
$$
\mbox{$\ExR{\sum_{\ell=1}^{\gamma_j+1} V_j^{\ell,a} \condR s(j)} =  \ExR{\sum_{\ell=1}^{T}   V_j^{\ell,a}\cdot \ \mbox{I}(\gamma_j \ge \ell-1) \condR s(j)}$}
$$
Let ${\cal F}_{\ell-1}$ denote the history until before the beginning of interval $I_j(\ell)$ (i.e. the values of $\theta_i(t)$ and the outcomes
of playing the arms until the time step before the first time step of $I_j(\ell)$). Note that the value of random variable $\mbox{I}(\gamma_j \ge \ell-1)$ is completely determined by  ${\cal F}_{\ell-1}$.
Therefore,
\begin{eqnarray*}
& & \mbox{$\ExR{\sum_{\ell=1}^{\gamma_j+1}  V_j^{\ell,a} \condR s(j)}$} \\
& = & \mbox{$\ExR{\sum_{\ell=1}^{T} \ExB{  V_j^{\ell,a}	\cdot \ \mbox{I}(\gamma_j \ge \ell-1)\ \ \condB s(j), {\cal F}_{\ell-1}} \condR s(j) }$} \\
& = & \mbox{$\ExR{\sum_{\ell=1}^{T} \ExB{  V_j^{\ell,a}\ \ \condB s(j), {\cal F}_{\ell-1}} \cdot \ \mbox{I}(\gamma_j \ge \ell-1) \condR s(j)}$}. \\
\end{eqnarray*}

Recall that $V^{\ell,a}_{j}$ is the number of contiguous steps $t$ for which $a$ is the best arm in saturated set $C(t)$ and iid variables $\theta_1(t)$ have value smaller than \remove{$\max_{i\in C(t)} \theta_i(t)$} \add{$\mu_{a}+\frac{\Delta_{a}}{2}$}. 
\remove{Now, given event $E(T)$, for all $t$ we have $\max_{i\in C(t)} \theta_i(t)\le \mu_{a_t}+\frac{\Delta_{a_t}}{2}$, where $a_t$ is the best arm in $C(t)$.}
Observe that given $s(j)=s$ \remove{$E(T)$,} and ${\cal F}_{\ell-1}$, $V^{\ell,a}_{j}$ is the length of an interval which ends when the value of an iid $\Beta(s+1,j-s+1)$ distributed variable exceeds \remove{a quantity smaller than} $\mu_{a}+\frac{\Delta_{a}}{2}$ (i.e., $M(t)$ happens), or if an arm other than $a$ becomes the best saturated arm, or if we reach time $T$. Therefore, given \remove{$E(T),$} $s(j), {\cal F}_{\ell-1}$, $V_j^{\ell, a}$ is stochastically dominated by $\min\{ X(j,s(j), \mu_a+\frac{\Delta_a}{2}), T\}$, where recall that $X(j,s(j), y)$ was defined as the number of trials until an independent sample from $\mbox{Beta}(s+1,j-s+1)$ distribution exceeds $y$.
That is, for all $a$,
\begin{eqnarray*} 
\label{eq:geometric-domination}
\mbox{$\ExB{V^{\ell,a}_j \condB s(j)\remove{, E(T)}, {\cal F}_{\ell-1}}$} & \le & \mbox{$\ExB{\min\{X(j,s(j), \mu_a + \frac{\Delta_a}{2}), T\} \condB s(j),\remove{ E(T),} {\cal F}_{\ell-1}}$} \\
& = & \mbox{$\ExB{\min\{X(j,s(j), \mu_a + \frac{\Delta_a}{2}), T\} \condB s(j)}$}.
\end{eqnarray*}
Substituting, we get,
\begin{eqnarray*}
& & \mbox{$\hspace{-0.3in}\ExR{\sum_{\ell=1}^{\gamma_j+1} V_j^{\ell,a} \ \condR	s(j)\remove{, E(T)}}$} \\
& \le & \mbox{$\ExR{\sum_{\ell=1}^{T}  \ExB{\min\{X(j,s(j), \mu_a + \frac{\Delta_a}{2}), T\} \condB s(j)} \cdot \ \mbox{I}(\gamma_j \ge \ell-1)\ \ \condR s(j)\remove{, E(T)} } $}\\
& = & \mbox{$  \ExB{\min\{X(j,s(j), \mu_a + \frac{\Delta_a}{2}), T\} \condR s(j)} \cdot \ExR{\sum_{\ell=1}^T \mbox{I}(\gamma_j \ge \ell-1) \condB s(j)} $}\\
& = & \mbox{$ \ExB{\min\{X(j,s(j), \mu_a + \frac{\Delta_a}{2}), T\} \condR s(j)} \cdot \ExR{\gamma_j+1 \condB s(j)} $}.
\end{eqnarray*}
\add{
This immediately implies,
$$\mbox{$\ExR{\sum_{a=2}^N \Delta_a \ExR{\sum_{\ell=1}^{\gamma_j+1} V_j^{\ell,a} \ \condR	s(j)}} \le \ExR{\sum_{a=2}^N \Delta_a \ExB{\min\{X(j,s(j), \mu_a + \frac{\Delta_a}{2}), T\} \condR s(j)} \cdot \ExR{\gamma_j+1 \condB s(j)} } $}$$
}
\end{proof}
\section{Proof of Theorem \ref{th:N-arms}: details}
\label{app:N-arms-details}
We continue the proof from the main body of the paper. 
For the first term in Equation \eqref{eq:tmp3}, 
\begin{eqnarray}
\label{eq:tmp4}
\ExR{\frac{1}{F_{j^*_a+1, y_a}(s(j^*_a))}\cdot I(s(j^*_a) \le \lfloor y_aj^*_a\rfloor)} & \le & \sum_j \ExR{\frac{1}{F_{j+1, y_a}(s(j))}\cdot I(s(j) \le \lfloor y_aj\rfloor)} \nonumber\\
& = & \hspace{-0.02in}\sum_j \sum_{s=0}^{\lfloor y_aj\rfloor} \frac{f_{j,\mu_1}(s)}{F_{j+1, y_a}(s)} \le \sum_j \frac{\mu_1}{\Delta'_a} e^{-D_a j} \le \frac{16}{\Delta_a^3},
\end{eqnarray}
where $\Delta'_a = \mu_1-y_a = \Delta_a/2$, $D_a$ is the KL-divergence between Bernoulli distributions with parameters $\mu_1$ and $y_a$. The penultimate inequality follows using \eqref{eq:smalls-bound} in the proof of Lemma \ref{lem:exj} in \add{Appendix \ref{app:lem:exj}}, with $\Delta'=\Delta'_a$, and $D = D_a$. The last inequality uses the geometric series sum (note that $D_a\ge 0$ by Gibbs' inequality).
\begin{center}
$\sum_{j} e^{-D_a j} \le \frac{1}{1-e^{-D_a}} \le \max\{ \frac{2}{D_a}, \frac{e}{e-1}\} \le \frac{2}{\min\{D_a, 1\}} \le \frac{2}{{\Delta'_a}^2} = \frac{8}{\Delta_a^2}.$
\end{center}
And, for the second term, using the fact that $F_{j+1,y}(s) \ge (1-y)F_{j,y}(s)$, and that for $s\ge \lceil yj \rceil $, $F_{j,y}(s) \ge 1/2$ (Fact \ref{fact:median}),
\begin{equation}
\label{eq:tmp5}
\ExR{\frac{1}{F_{j^*_a+1, y_a}(s(j^*_a))}\cdot I(s(j^*_a) \ge \lceil y_aj^*_a\rceil)}\le \frac{2}{1-y_a} \le \frac{4}{\Delta_a}.
\end{equation}
Substituting the bound from Equation \eqref{eq:tmp4} and \eqref{eq:tmp5} in Equation~\eqref{eq:tmp3},
\begin{equation}
\label{eq:tmp6}
\mbox{$\sum_{j=0}^{T-1} \ExR{\ExB{\gamma_j|s(j)} \sum_a 3\Delta_a \ExB{X(j,s(j), y_a)|s(j)} } 
\le (\sum_u L_u + 4(N-1))\sum_a (\frac{48}{\Delta_a^2} + 12). $}
\end{equation}

Also, using Lemma \ref{lem:exj} while substituting $y$ with $y_a=\mu_a+\frac{\Delta_a}{2}$ and $\Delta'$ with  $\mu_1-y_a=\frac{\Delta_a}{2}$, 
\begin{eqnarray}
\label{eq:tmp7}
& & \hspace{-1in}\sum_{j=0}^{T-1} \sum_{a=2}^N (3\Delta_a) \ExR{\ExB{\min\{X(j,s(j), \mu_a + \frac{\Delta_a}{2}), T\} \condB s(j)}} \nonumber\\
& \le & \sum_a (3\Delta_a) \sum_{j=0}^{\frac{16(\ln T)}{{\Delta_a}^2}-1} \left(1+ \frac{2}{1-y_a}\right) + \sum_{j\ge\frac{16(\ln T)}{{\Delta_a}^2}}^T(3\Delta_a) \frac{16}{T} \nonumber\\
& \le & \sum_a \frac{48\ln T}{\Delta_a} + \frac{192}{\Delta_a^2} + 48\Delta_a. 
\end{eqnarray}

Substituting bounds from \eqref{eq:tmp6} and \eqref{eq:tmp7} in Equation \eqref{eq:GammaV-dep}, 

\begin{eqnarray*}
& & \hspace{-0.2in}\sum_{j=0}^{T-1} \ExR{\sum_{\ell=1}^{\gamma_j+1} \sum_a V_j^{\ell,a} 3\Delta_a \remove{|	E(T)}}  \\
& \le & (\sum_u L_u + 4(N-1))\sum_a (\frac{48}{\Delta_a^2} + 12) + \sum_a (\frac{48\ln T}{\Delta_a} + \frac{192}{\Delta_a^2} + 48\Delta_a) \\
& \le & \remove{768} \add{1152} (\ln T)(\sum_i \frac{1}{\Delta_i^2})^2 + \remove{192} \add{288} (\ln T) \sum_i \frac{1}{\Delta_i^2} + 48 (\ln T) \sum_a \frac{1}{\Delta_a} + 192 \add{N} \sum_a \frac{1}{\Delta_a^2} + \add{96}(N-1).
\end{eqnarray*}
Now, using the result that $\remove{\Pr(\overline{E(T)}) \le 4(N-1)/T} \add{\Pr(\overline{E(t)}) \le 4(N-1)/T^2}$ (\text{by Lemma~\ref{lem:multiple-E(T)}}) with Equation \eqref{eq:R-V},  we can bound the total regret due to playing saturated arms as
\begin{eqnarray*}
\Ex[{\cal R}^s(T)] & =  & \sum_j \Ex[{\cal R}^s(I_j) \remove{|E(T)}] \remove{ + T \cdot \Pr(\overline{E(T)})} \\
& = & \sum_j \ExR{\sum_{\ell=1}^{\gamma_j+1} \sum_a V_j^{\ell,a} \add{3}\Delta_a \remove{|	E(T)}} +\add{2T \cdot \sum_t \Pr(\overline{E(t)})}  \remove{4(N-1)}\\
       & \leq    & \add{1152} \remove{768} (\ln T)(\sum_i \frac{1}{\Delta_i^2})^2 + \add{288}\remove{192} (\ln T) \sum_i \frac{1}{\Delta_i^2} \\
       & & \hspace{1in} + 48 (\ln T) \sum_a \frac{1}{\Delta_a} + 192 \add{N} \sum_a \frac{1}{\Delta_a^2} + \add{96}(N-1) + 8(N-1).
\end{eqnarray*}
Since an unsaturated arm $u$ becomes saturated after $L_u$ plays, regret due to unsaturated arms is at most 
$$\Ex[{\cal R}^u(T)] \le \sum_{u=2}^N L_u \Delta_u  = \remove{16} \add{24} (\ln T) \left(\sum_{u=2}^N \frac{1}{\Delta_u}\right). $$

Summing the regret due to saturated and unsaturated arms, we obtain the result of Theorem \ref{th:N-arms}.\newline\\

The proof for the alternate bound in Remark \ref{rem:N-arms} will essentially follow the same lines except that instead of dividing the interval $I_j(\ell)$ into subdivisions $V_j^{\ell,a}$, we will simply bound the regret due to saturated arms by number of plays times $\Delta_{max}$. That is, we will use the bound,
$$\Ex[{\cal  R}(I_j)]  \le \Ex[ \sum_{\ell=1}^{\gamma_j+1} |I_j(\ell)| \cdot \Delta_{max}]$$
To bound $\Ex[\sum_{\ell=1}^{\gamma_j+1} |I_j(\ell)|]$, we follow the proof for bounding $\Ex[\sum_{\ell=1}^{\gamma_j+1} V^{\ell,\bar{a}}_j]$ for $\bar{a}=\arg \max_{i\ne 1} \mu_i$, i.e.,  replacing $\mu_a$ with $\mu_{\bar{a}} = \max_{i\ne 1} \mu_1$, and $\Delta_a$ with $\Delta_{min}$. In a manner similar to Lemma \ref{lem:GammaV-dep}, we can obtain 
$$\Ex[\sum_{\ell=1}^{\gamma_j+1} |I_j(\ell)| \remove{| E(T)}] \le \Ex[(\gamma_j+1) \min\{X(j,s(j), \mu_M+\frac{\Delta_{min}}{2}), T\} \remove{| E(T)}] +  \add{\Ex[\sum_{t\in I_j} T\cdot I(\overline{E(t)})]}$$
And, consequently, using Equation \eqref{eq:tmp3}, and Equation \eqref{eq:tmp4}--\eqref{eq:tmp7}, and Lemma \ref{lem:multiple-E(T)}, we can obtain
$$\sum_j \Ex[\sum_{\ell=1}^{\gamma_j+1} |I_j(\ell)|] \le O((\sum_u L_u) \frac{1}{\Delta_{min}^3}) = O(\frac{1}{\Delta_{min}^3}\left(\sum_{a=2}^N \frac{1}{\Delta_a^2}\right) \ln T),$$
giving a regret bound of $O(\frac{\Delta_{max}}{\Delta_{min}^3}\left(\sum_{a=2}^N \frac{1}{\Delta_a^2}\right) \ln T)$.

\end{document}